\definecolor{sb_blue}{RGB}{31,119,180}
\definecolor{sb_orange}{RGB}{255,127,14}
\definecolor{sb_green}{RGB}{44,160,44}
\definecolor{sb_red}{RGB}{214,39,40}
\definecolor{sb_purple}{RGB}{148,103,189}
\definecolor{sb_brown}{RGB}{140,86,75}
\definecolor{sb_pink}{RGB}{227,119,194}
\definecolor{sb_gray}{RGB}{234, 234, 242}
\definecolor{sb_blue_025}{RGB}{199, 221, 236}
\definecolor{sb_orange_025}{RGB}{255, 223, 195}
\definecolor{sb_orange_02}{RGB}{255, 229, 207}
\definecolor{sb_blue_015}{RGB}{222, 235, 244}
\definecolor{sublimegray}{RGB}{116,112,93}
\definecolor{e_1}{RGB}{59,82,139}
\definecolor{e_2}{RGB}{33,145,140}
\definecolor{e_3}{RGB}{94,201,98}
\definecolor{m_10}{RGB}{18,13,49,}
\definecolor{m_9}{RGB}{51,16,103,}
\definecolor{m_8}{RGB}{89,21,126,}
\definecolor{m_7}{RGB}{126,36,130,}
\definecolor{m_6}{RGB}{163,48,126,}
\definecolor{m_5}{RGB}{200,62,115,}
\definecolor{m_4}{RGB}{233,84,98,}
\definecolor{m_3}{RGB}{250,125,94,}
\definecolor{m_2}{RGB}{254,169,115,}
\definecolor{m_1}{RGB}{254,211,149,}
\definecolor{v_10}{RGB}{72,33,115,}
\definecolor{v_9}{RGB}{67,62,133,}
\definecolor{v_8}{RGB}{56,88,140,}
\definecolor{v_7}{RGB}{45,112,142,}
\definecolor{v_6}{RGB}{37,133,142,}
\definecolor{v_5}{RGB}{30,155,138,}
\definecolor{v_4}{RGB}{42,176,127,}
\definecolor{v_3}{RGB}{82,197,105,}
\definecolor{v_2}{RGB}{134,213,73,}
\definecolor{v_1}{RGB}{194,223,35,}
\definecolor{lhl}{RGB}{234, 245, 250}
\definecolor{hl}{RGB}{205, 232, 248}
\definecolor{lg}{RGB}{127, 127, 127}
\definecolor{ab_m0}{RGB}{4, 4, 20}
\definecolor{ab_m1}{RGB}{13, 10, 41}
\definecolor{ab_m2}{RGB}{25, 16, 63}
\definecolor{ab_m3}{RGB}{39, 18, 88}
\definecolor{ab_m4}{RGB}{56, 16, 108}
\definecolor{ab_m5}{RGB}{73, 16, 120}
\definecolor{ab_m6}{RGB}{87, 21, 126}
\definecolor{ab_m7}{RGB}{103, 27, 128}
\definecolor{ab_m8}{RGB}{118, 33, 129}
\definecolor{ab_m9}{RGB}{134, 39, 129}
\definecolor{ab_m10}{RGB}{150, 44, 128}
\definecolor{ab_m11}{RGB}{166, 49, 125}
\definecolor{ab_m12}{RGB}{183, 55, 121}
\definecolor{ab_m13}{RGB}{197, 60, 116}
\definecolor{ab_m14}{RGB}{213, 68, 109}
\definecolor{ab_m15}{RGB}{227, 78, 101}
\definecolor{ab_m16}{RGB}{238, 91, 94}
\definecolor{ab_m17}{RGB}{246, 108, 92}
\definecolor{ab_m18}{RGB}{250, 127, 94}
\definecolor{ab_m19}{RGB}{252, 144, 101}
\definecolor{ab_m20}{RGB}{254, 163, 111}
\def\ee{\mathrm{e}}
\newcommand{\po}{\phantom{0}}
\newcommand{\TF}{\texttt{TF}}
\newcommand{\SEQ}{\texttt{SEQ}}
\newcommand{\RL}{\texttt{RL}}
\newcommand{\NC}{{\mathsf{NC}^1}}
\newcommand{\TC}{{\mathsf{TC}^0}}
\newcommand{\AC}{{\mathsf{AC}^0}}
\newcommand{\porl}{Partially Observable RL\xspace}
\newcommand{\POrl}{Partially Observable RL\xspace}
\newcommand{\ssm}{state space modeling\xspace}
\newcommand{\mhl}[1]{\sethlcolor{hl}\hl{#1}}
\newcommand{\mlhl}[1]{\sethlcolor{sb_orange_025}\hl{#1}}
\newcommand{\parity}{\texttt{PARITY}\xspace}
\newcommand{\sfive}{{\texttt{SYM(5)}}\xspace}
\newcommand{\mln}[2]{\mathcal M^{#1}\qty(#2)\xspace}
\newcommand{\ml}[1]{\mathcal M^{#1}\xspace}
\newcommand{\nccomplete}{{((0+1)^3(01^*0+1))^*}}
\newcommand{\RR}{{\mathbb{R}}}
\newcommand{\note}[1]{#1}
\newcommand{\cam}[1]{{#1}}
\theoremstyle{plain}
\newtheorem{theorem}{Theorem}[section]
\newtheorem{proposition}[theorem]{Proposition}
\newtheorem{lemma}[theorem]{Lemma}
\newtheorem{corollary}[theorem]{Corollary}
\theoremstyle{definition}
\newtheorem{definition}[theorem]{Definition}
\theoremstyle{remark}
\newtheorem{remark}[theorem]{Remark}
\begin{document}

\twocolumn[
\icmltitle{Rethinking Transformers in Solving POMDPs}



\icmlsetsymbol{equal}{*}

\begin{icmlauthorlist}
\icmlauthor{Chenhao Lu}{thu}
\icmlauthor{Ruizhe Shi}{thu,equal}
\icmlauthor{Yuyao Liu}{thu,equal}
\icmlauthor{Kaizhe Hu}{thu}
\icmlauthor{Simon S. Du}{uw}
\icmlauthor{Huazhe Xu}{thu,qizhi,shailab}
\end{icmlauthorlist}

\icmlaffiliation{thu}{IIIS, Tsinghua University}
\icmlaffiliation{qizhi}{Shanghai Qi Zhi Institute}
\icmlaffiliation{uw}{University of Washington}
\icmlaffiliation{shailab}{Shanghai AI Lab}
\icmlcorrespondingauthor{Chenhao Lu}{luch21@mails.tsinghua.edu.cn}
\icmlcorrespondingauthor{Huazhe Xu}{huazhe\_xu@mail.tsinghua.edu.cn}
\icmlkeywords{Transformer, RNN, POMDP, Linear RNN, Partially Observable RL}

\vskip 0.3in
]



\printAffiliationsAndNotice{\icmlEqualContribution} 

\begin{abstract}

Sequential decision-making algorithms such as reinforcement learning~(RL) in real-world scenarios inevitably face environments with partial observability. This paper scrutinizes the effectiveness of a popular architecture, namely Transformers, in Partially Observable Markov Decision Processes (POMDPs) and reveals its theoretical and empirical limitations. 
We establish that regular languages, which Transformers struggle to model, are reducible to POMDPs. This poses a significant challenge for Transformers in learning POMDP-specific inductive biases, due to their lack of inherent recurrence found in other models like RNNs.
This paper casts doubt on the prevalent belief in Transformers as sequence models for RL and proposes to introduce a point-wise recurrent structure. The Deep Linear Recurrent Unit (LRU) emerges as a well-suited alternative for \porl, with empirical results highlighting the sub-optimal performance of Transformer and considerable strength of LRU. \cam{Our code is open-sourced\footnote{\url{https://github.com/CTP314/TFPORL}}.}

\end{abstract}
\vspace{-0.3in}
\section{Introduction}
\label{sec:intro}
Reinforcement Learning (RL) in the real world confronts the challenge of incomplete information~\cite{challenge_in_RL} due to partial observability, necessitating decision-making based on historical data. The design of RL algorithms under partial observability, denoted as \textbf{\porl}~\cite{posd,prs,recurrentRL}, typically employs a hierarchical structure combining $(\SEQ,\RL)$. This structure involves firstly feeding the history into a sequence model $\SEQ$, such as Recurrent Neural Network (RNN)~\cite{rnn} or Long Short-Term Memory (LSTM)~\cite{lstm}, yielding a hidden state containing past information, then processing it using existing $\RL$ algorithms.


Regarding the sequence model, Transformer \cite{Transformers}, renowned for its achievements in the natural language processing (NLP) domain~\cite{GPT2,GPT3,GPT4}, stands out as a prominent candidate. Transformers have shown a strong ability to handle contexts in a non-recurrent manner. Compared with their recurrent counterpart like RNNs, the advantages of Transformers as a sequence model shine in several aspects: 1) long-term memory capacity, as opposed to RNNs with rapid memory decay~\cite{tfshine, gtrxl}; 2) effective representation learning from context for specific tasks~\cite{tfefficientwm, ad, mdt, tfwmhappy}, benefiting meta-RL or certain environments~\cite{atari}; 3) stronger learning ability on large-scale datasets~\cite{vpt}. 



However, deploying Transformers in \porl introduces challenges, commonly manifesting as \cam{sample inefficiency}~\cite{gtrxl, tfshine}. This issue is similarly observed in computer vision (CV) and is attributed to the data amount that Transformers require to learn the problem-specific inductive biases~\cite{vit}. 
While it is validated in CV, it remains unknown whether data amount is the key ingredient in decision making. Hence, a natural question arises:
\textit{Can Transformers effectively solve decision-making problems in POMDPs with sufficient data?}


In this work, we investigate this critical question and challenge the conventional wisdom. We argue that Transformers cannot solve POMDP even with massive data. 
This stance is inspired by a key observation: 
While most RNNs are complete for regular languages, Transformers falter to model them~\cite{chomsky, hahn} 
. A notable example is their struggle with tasks like \parity, which is to determine the parity of the occurrence of ``1'' in a binary string.
We hypothesize that, in POMDPs, this limitation becomes pronounced due to the close relationship between regular languages and POMDPs.


To elaborate further, regular languages exhibit a direct correspondence with Hidden Markov Models (HMMs)~\cite{hmmsrl-evid1}, and POMDPs can be regarded as HMMs augmented with an incorporated decision-making process. We further establish that regular languages can be effectively reduced to POMDPs. \note{
From the computational complexity perspective, the parallel structure of the Transformer makes it equivalent to a constant-depth computation circuit. Some regular languages fall outside of this complexity class, making the POMDP problems derived from them harder, and Transformer would struggle to solve them. This is demonstrated both theoretically and empirically in this study.
}

To \cam{alleviate} the limitations of Transformers \note{caused by the parallel structure}, we propose to introduce a pointwise recurrent structure. Upon reviewing current variants of sequence models with such a structure, we find that they can be broadly generalized as \note{linear RNNs.}
\note{Based on extensive experiments over a range of sequence models over POMDP tasks with diverse requirements, we highlight LRU~\cite{linearrnn} as a linear rnn model well-suited for \porl.} 
Our contributions are three-fold:


\begin{itemize}[leftmargin=2em,itemsep=0em,topsep=0em]
\item We demonstrate the theoretical limitations of Transformers as sequence model backbones for solving POMDPs, through rigorous analysis.
\item To better utilize the inductive bias of the sequence model, we study the advantages of the Transformer and the RNN, and advocate the linear RNN as a better-suited choice for solving POMDPs\note{, taking advantage of both models.}
\item Through extensive experiments across various tasks, We compare the capabilities exhibited by various sequence models across multiple dimensions. Specifically, we show that Transformers exhibit sub-optimal performance as the sequence model in \cam{certain POMDPs}, while highlighting \cam{the strength} of linear RNNs when assessed comprehensively.
\end{itemize}


\section{Related Work}
\textbf{Theoretical limitations of Transformers}.
There is a substantial body of work investigating the theoretical limitations of Transformers from the perspective of computational complexity and formal language. For example, \citet{chomsky,rsa} experimentally verifies that 
\note{RNNs can recognize regular languages, but Transformers are unable to achieve this.}
Additionally, \citet{attentionlim} demonstrates that Transformers are not robust in handling sequence length extrapolation. Moreover, \citet{parallelism,attn-circuits} point out that, under limited precision, $\TC$ serves as an upper bound for the computational power of Transformers. Applying this result, \citet{cot} illustrates the challenges Transformers face in solving practical problems such as arithmetic operations and linear systems of equations. \cam{Currently, works such as \citet{tfshine, popgym,s4wm} discuss the pros and cons of transformers in RL algorithms, with a focus on analyzing the advantages or providing simple evaluations. In contrast, integrating relevant theories from formal languages, we offer a new theoretical perspective on analyzing the limitations of transformers in RL.}

\textbf{Variants of sequence models for handling long contexts}.
Multiple variants of mainstream sequence models designed to handle long contexts have provided significant inspiration for this work. In the case of RNN-like models, addressing the issue of rapid memory decay has led to the emergence of linear RNNs~\cite{s4,linearrnn}, which remove activation functions in the recurrence part. These models have demonstrated excellent performances in benchmarks for long-range modeling~\cite{lra}. For Transformers, to tackle limited training length and inefficient inference, current studies emphasize the introduction of recurrence. Recurrence in Transformers can be categorized into two types: 1) \textit{chunkwise recurrence}, which processes parts exceeding the context length using a recurrent block with minimal alterations to the original parallel structure~\cite{trxl,blockrecurrent}; 2) \textit{pointwise recurrence}, which derives recurrence representations by linearizing attention~\cite{retnet,rwkv,fwp,fart}. 
\note{We argue that linear RNNs and pointwise recurrence Transformers ultimately converge to a similar solution, incorporating the strength of both approaches and are suitable for solving \porl problems}. 

\textbf{Applications of sequence models in RL}. In recent years, there have been many applications of Transformers in RL, such as Decision Transformer (DT)~\cite{dt}, its variants~\cite{qdt,elasticdt} in offline RL; GTrXL~\cite{gtrxl}, Online DT~\cite{onlinedt} in online RL, and the Transformer State Space Model~\cite{TSSM} as world models. There are works~\cite{wikirl,lamo} showing that the inductive bias of pre-trained Transformers could help RL, where the states are fully observable. \citet{defog} uses Transformer to solve a specific partially observable setting, frame drops, whereas demanding additional assumptions on the prior distribution. On the other hand, recent works comparing Transformer-based and RNN-based approaches in \porl empirically support our idea that Transformers have weaknesses in partially observable environments~\cite{popgym, s4wm}. In many cases, simpler architectures like RNNs and LSTMs prove to be more effective. For instance, DreamerV3~\cite{dreamerv3}, which adopts GRU~\cite{gru} as the backbone of the Recurrent State Space Model, has outperformed previous Transformer-based approaches like VPT~\cite{vpt} and IRIS~\cite{tfefficientwm}. \cam{Additionally, there has been a recent line of research on the application of linear RNNs in RL~\cite{irie2021going,s4rl,samsami2024mastering,}, which has shown promising results. This paper will incorporate insights into the limitations of Transformers to analyze why this would be a natural choice.}


\section{Preliminaries}
\label{sec:pre}
\textbf{Sequential neural network.} Sequential Neural Networks are a type of deep learning model for sequence modeling. Given a input sequence $\qty{u_i}_{i=1}^n$, the model learns a hidden state sequence $\qty{x_i}_{i=1}^n$ and yields the output sequence $\qty{y_i}_{i=1}^n$. There are currently two mainstream methods for computing the hidden state $x_i$: 
\begin{itemize}[leftmargin=2em,itemsep=0em,topsep=0em]
    \item Recurrent-like: $x_t=\sigma(Ax_{t-1}+Bu_{t-1}+c)$ where $\sigma$ is the activation function; 
    \item Attention-like: $x_t=\operatorname{attn}\qty(W^Q\tilde{U},W^K\tilde{U},W^V\Tilde{U})_t$. Here $\tilde{U}_i=u_i+p_i$, $p_i$ is a position embedding, and $\operatorname{attn}(Q,K,V)$ is defined as $\operatorname{softmax}(QK^\top+M)V$, where $M$ is an attention mask.
\end{itemize}

\begin{figure}[ht]
    \vskip -0.1in
    \centering
    \subfloat[(Pointwise) Recurrent\label{fig:recurrent}]{
        \includegraphics[width=0.465\columnwidth]{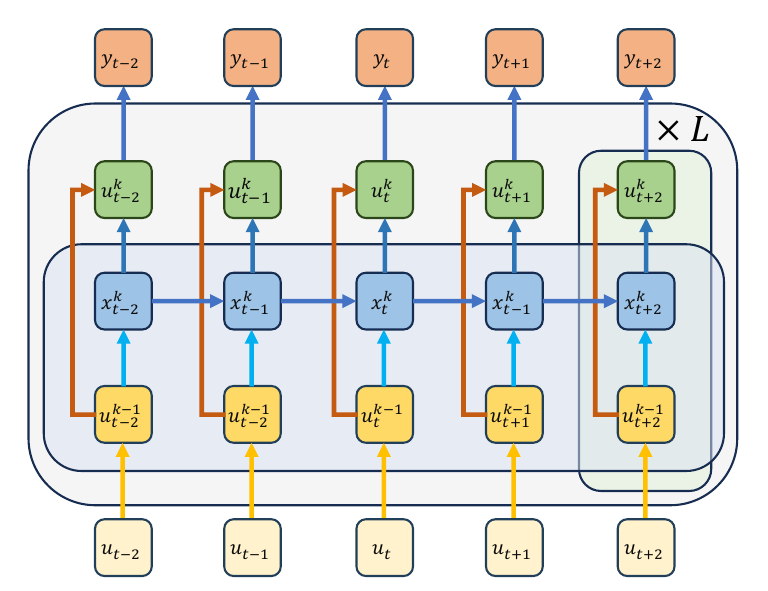}
    }
    \hfill
    \subfloat[(Parallel) Attention\label{fig:attention}]{
        \includegraphics[width=0.465\columnwidth]{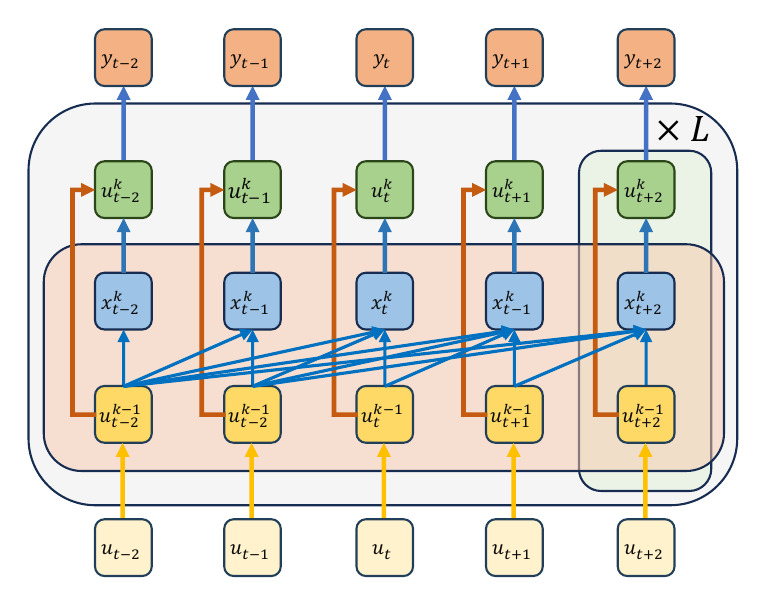}
    }
    \caption{The Left figure indicates the general structure of recurrent-like sequential neural networks and the right figure represents the attention-like ones. \cam{Both of them can be deepened by pointwise transformations and skip connections.}}
    \label{fig:seq-model}
    \vskip -0.2in
\end{figure}

\cam{Current sequence models use multiple layers.} The overall structure is illustrated in Figure~\ref{fig:seq-model}. For recurrent models like RNNs, GRUs, etc., \cam{
the output of the previous layer is directly used as the input for the next layer.}, while for Transformers, \cam{the pointwise transformations typically take the form of \cam{MLPs} with skip connections.}

\textbf{POMDP.} A Partially Observable Markov Decision Process (POMDP) $\mathcal M$ can be defined as $(S, A, T, R, \Omega, O, \gamma)$ \cite{ASTROM1965174}. 
At time $i$, the agent is in state $s_i\in S$, observes $o_i\in \Omega \sim O(\cdot|s_i)$, takes action $a_i\in A$, receives reward $R(s_i, a_i)$ and would transit to $s_{i+1}\sim T(\cdot|s_i,a_i)$. The agent's policy based on the observation history $h_t=\qty{\qty(o_i, a_i, r_i)}_{i=1}^t$ is denoted as $\pi(\cdot|h_{t-1}, o_t)$.
We say an algorithm $\mathcal A$ that can \textit{solve} POMDP as being able to find the optimal policy $\pi^\star$ for given POMDP $\mathcal M$ where: \[
    \pi^\star=\operatorname*{argmin}_\pi\mathop{\mathbb{E}}_{\substack{a_t\sim\pi(\cdot|h_{t-1},o_t)  \\ s_t\sim T(\cdot|s_{t-1},a_t)\\ o_t\sim O(\cdot|s_t)}}\qty[\sum_{t=0}^\infty\gamma^t R(s_t,a_t)]\;.
\]

\textbf{DFA \& regular language.} Deterministic Finite Automata (DFA) can be defined as \cam{$A=(S,\Sigma,T,s_0,F)$}, where \cam{$S$} is a finite set of states, $\Sigma$ is a finite set of symbols, \cam{$T:S\times \Sigma\to S$} is the transition function, \cam{$s_0$} is the start state, and $F\subseteq Q$ is the set of accepting states. A string $w$ whose \cam{$i$-th symbol is $w_i$} is \textit{accepted} by $A$ if \cam{$\exists (s_0,s_1,\ldots,s_n)$}, s.t. \cam{$s_i\in S$} for $1\le i\le n$, \cam{$s_i=T(s_{i-1},w_i)$} and \cam{$s_n\in F$}. $L$ is a regular language if it is \textit{recognized} by DFA $A$, that is, $L=\{w:A\text{ \textit{accepts} }w\}$.


\section{Limitations of Transformer in \POrl}
\label{sec:lim-of-tf-in-porl}
\note{
RL algorithms typically take as inputs the current state
with the assumption of the Markov property. In partially observable environments that lack the Markov property, the hidden state extracted by $\SEQ$ from the observation history is thus anticipated to contain the information of the \textit{real state} to benefit subsequent $\RL$. 
}

\note{Notably, \citet{chomsky} points out that Transformers (referred to as $\TF$s) fail to recognize regular languages. Inspired by the significant correspondence between regular languages and POMDPs (details in Definition~\ref{def:pomdp-regular}), we naturally conjecture that $\TF$ is not capable of retrieving the information of \textit{real state} from partial observations accurately, which would lead to a decline in the performance of the pipeline $(\SEQ,\RL)$.
} 

Building upon this view, this section first shows that solving a POMDP problem is harder than solving a regular language problem. Afterward, we will introduce two theoretical results to elucidate the limitations of Transformers, supported by simple examples for illustrations. Consequently, it is inferred that $(\TF,\RL)$ cannot address POMDPs generally.

\subsection{Reduction from Regular Language to POMDP}

\begin{proposition} 
\label{prop:r-to-pomdp}
If an algorithm $\mathcal A=(\texttt{SEQ},\texttt{RL})$ can solve POMDPs, then given a regular language $L$, $\mathcal A$ can recognize $L$ by solving a POMDP problem $\mathcal M$.
\end{proposition}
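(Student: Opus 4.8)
The plan is to construct, from a given DFA $A=(S,\Sigma,T,s_0,F)$ recognizing $L$, a POMDP $\mathcal M$ whose optimal policy implicitly computes membership in $L$, so that running $\mathcal A=(\SEQ,\RL)$ on $\mathcal M$ yields a recognizer for $L$. The key design principle is that the POMDP should \emph{feed the input string as the observation stream} while hiding the DFA state, thereby forcing $\SEQ$ to reconstruct the DFA-state information (the ``real state'') from the observation history, exactly the task the reduction is meant to stress.

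Concretely, I would build $\mathcal M$ in two phases. In the \emph{reading phase}, the state of $\mathcal M$ tracks the current DFA state $s_i\in S$ together with a phase counter; at each step the environment emits an observation that reveals only the next input symbol $w_i\in\Sigma$ (and reveals \emph{nothing} about $s_i$), the agent's action is ignored (or restricted to a dummy action), and the transition is the deterministic DFA move $s_{i+1}=T(s_i,w_i)$. After the string has been consumed, the process enters a \emph{decision phase}: the environment signals end-of-string (a special observation), the agent must output a single bit $a\in\{0,1\}$ interpreted as ``accept'' or ``reject,'' and the reward is $+1$ if $a=\mathbbm{1}[s_n\in F]$ and $0$ (or $-1$) otherwise; then the episode terminates (absorbing state). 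Since rewards depend only on whether the final guess matches $\mathbbm{1}[s_n\in F]$, the optimal policy $\pi^\star$ must, on every history $h$ corresponding to a prefix-then-terminator, choose $a=\mathbbm{1}[w\in L]$ — this is well-defined because the DFA is deterministic, so the history determines $s_n$ uniquely. Hence if $\mathcal A$ solves $\mathcal M$ (produces $\pi^\star$), then the map $w\mapsto \pi^\star(\text{terminator}\mid h_w)$ is exactly the indicator of $L$, i.e.\ $\mathcal A$ recognizes $L$.

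A couple of technical points need care. First, the definition of POMDP in the preliminaries has an infinite-horizon discounted objective, so I need the post-decision state to be absorbing with zero reward (and possibly pad short strings with a neutral ``waiting'' observation/state) so that the total discounted return is maximized precisely by guessing correctly at the decision step; I should also make sure the input distribution over strings has full support (e.g.\ every finite string reachable with positive probability, or quantify over all strings) so that ``optimal'' pins down the policy on every relevant history rather than almost everywhere. Second, one must ensure the construction is genuinely a valid POMDP instance in the model's sense — finite $S,A,\Omega$, well-defined $T,R,O$ — which is immediate since $A$ is finite; the observation kernel $O$ is deterministic given the state's ``current-symbol'' field, and to make the symbol sequence itself arise from the dynamics I can let the hidden state also carry the remaining suffix (or, more cleanly, take the input symbol as part of the action-independent stochastic transition, with the string drawn from a fixed distribution).

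The main obstacle is not the combinatorics of the automaton but aligning the reduction with the paper's \emph{exact} notion of ``solve'': the statement says $\mathcal A$ finds $\pi^\star$, and I must argue that recovering $\pi^\star$ on the decision-phase histories is equivalent to deciding $L$, which requires the reward structure to make the correct-guess action \emph{strictly} optimal and unique on those histories (otherwise $\pi^\star$ is not forced to compute membership). Getting the horizon/discounting bookkeeping right so that a single correct bit at the end strictly dominates — and handling the bundling of the string into the dynamics so observations, not an external oracle, carry the symbols — is the delicate part; everything else (finiteness, determinism of $T$, the history-to-state map) is routine.
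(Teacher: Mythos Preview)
Your proposal is correct and takes essentially the same approach as the paper: hide the DFA state, emit the input symbols as observations, and reward a correct accept/reject guess at a terminal step. The paper's concrete instantiation is the ``cleaner'' alternative you mention at the end---each hidden state is a pair $(q,w)$ with $w$ the current symbol, each $w_t$ (including the terminator $\#$) is sampled uniformly at random as part of the transition, and the episode ends when $\#$ is observed---which automatically gives every finite string positive probability and addresses your full-support concern.
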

\textbf{Proof idea.} We construct a POMDP $\mathcal M$, such that each state represents a transition in $L$, and the observation at timestep $t$ is the corresponding character $w_t$. The agent could output \textit{accept} or \textit{reject}, and the reward is assigned if and only if the final output aligns with the acceptance of the string $w=w_0\ldots w_{t-1}$ in $L$. In this way, the optimal policy $\pi$ on $\mathcal M$ is to \textit{accept} all $w\in L$ and \textit{reject} all $w\notin L$, so if an algorithm $\mathcal A$ can solve POMDP $\mathcal M$, then $\mathcal A$ can recognize $L$. Proof details are deferred to Appendix~\ref{subsec:proof-r-to-pomdp}. 

\begin{definition}[\note{POMDP derived from regular language $L$}] 
\label{def:pomdp-regular}
Given a regular language $L$, the POMDP derived as Proposition~\ref{prop:r-to-pomdp} is denoted as $\ml{L}$. For an integer $n$, $\mln{L}{n}$ represents
\note{a special case of $\ml{L}$ whose horizon is no longer than $n$.} 
\end{definition}

\begin{remark}
\note{When implementing RL algorithms, especially in online settings, it is common to set a truncated time $n$ for training and evaluation purposes. For $\mln{L}{n}$ with maximum horizon $n$, since observation during training and evaluation come from the same distribution, we can analogize it to \textit{fitting} in supervised learning. Furthermore, in partially observable cases, historical information needs to be considered. If there is no time limit set, there is a need for \textit{length extrapolation}, which we can analogize to \textit{generalization} in supervised learning, as is captured by $\ml{L}$.}
\end{remark}

In Figure~\ref{fig:reduction-parity}, we illustrate how to construct $\mathcal M^L$. 
We also provide experiments to verify the \note{reduction} (cf. Section~\ref{exp:r}).


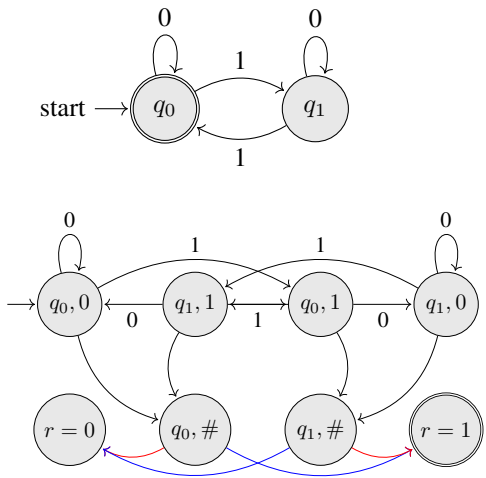
\begin{figure}[htbp]
    \vskip -0.1in
    \centering
    \subfloat{
        \begin{adjustbox}{}
        \begin{tikzpicture}[shorten >=1pt,node distance=2cm,on grid,auto]
  \tikzstyle{every state}=[fill={rgb:black,1;white,10}]

    \node[state,initial,accepting]   (q_0)                    {$q_0$};
    \node[state] (q_1)  [right of=q_0]    {$q_1$};

    \path[->]
    (q_0) edge [loop above] node {0}    (   )
          edge [bend left]  node {1}    (q_1)
    (q_1) edge [bend left]  node {1}    (q_0)
          edge [loop above] node {0}    (   );
\end{tikzpicture}
        \end{adjustbox}
    }
    \newline
    \subfloat{
        \begin{adjustbox}{width=0.8\columnwidth}
        \begin{tikzpicture}[shorten >=1pt,node distance=2cm,on grid,auto]
  \tikzstyle{every state}=[fill={rgb:black,1;white,10}]

    \node[state,initial,initial text={}]   (q0q0)       {$q_0,0$};
    \node[state] (q1q0)  [right of=q0q0]    {$q_1,1$};
    \node[state] (q0q1)  [right of=q1q0]    {$q_0,1$};
    \node[state] (q1q1)  [right of=q0q1]    {$q_1,0$};
    \node[state] (q0E)  [below of=q1q0]    {$q_0,\#$};
    \node[state] (q1E)  [below of=q0q1]    {$q_1,\#$};
    \node[state] (F) [left of=q0E] {$r=0$};
    \node[state, accepting] (S) [right of=q1E] {$r=1$};

    \path[->]
    (q0q0) edge [bend left]  node {1}    (q0q1)
           edge [loop above] node {0}    ()
           edge [bend right] node {} (q0E)
    (q1q0) edge node {}   (q0q1)
           edge node {0}   (q0q0)
           edge [bend right] node {} (q0E)
    (q0q1) edge node {1}   (q1q0)
           edge node[below] {0}   (q1q1)
           edge [bend left] node {} (q1E)
    (q1q1) edge [bend right] node[above] {1}   (q1q0)
           edge [loop above] node {0}    ()
           edge [bend left] node {} (q1E)
    (q0E)  edge [blue] [bend right] node {} (S)
           edge [red] [bend left] node {} (F)
    (q1E)  edge [red] [bend right] node {} (S)
           edge [blue] [bend left] node {} (F)
    ;
\end{tikzpicture}
        \end{adjustbox}
    }
    \caption{Above: Illustration for DFA of \parity. There are two states $q_0$ and $q_1$, where $q_0$ is both the initial state and the accepting state. The transitions are plotted in gray arrows. Below: Illustration for $\ml{\parity}$. The states are $(q_i,w)$ where $i\in\{0,1\}$ and $w\in\{0,1,\#\}$, and the agent could observe $w$. The initial state are randomly sampled from $(q_0,w)$. The stochastic transitions are plotted in gray arrows. At final state $(q_i,\#)$, \textcolor{blue}{blue} arrows stand for choosing \textit{accept}, and \textcolor{red}{red} arrows stand for choosing \textit{reject}.}
    \label{fig:reduction-parity}
    \vskip -0.2in
\end{figure}

\subsection{Limitations in Fitting: Solving $\mln{L}{n}$}
\label{subsec:fitting}
While prior research has claimed universality for Transformers, specifically proving their Turing completeness and ability to approximate any seq-to-seq function on compact support \cite{tf-turing-1, tf-turing-2, tf-no-expected, tf-universal}, it is crucial to note certain impractical assumptions underlying these assertions as they often rely on assumptions of infinite precision and finite length~\cite{approx-survey}.

In this subsection, we assume that $(\TF, \RL)$ (\TF~denotes a Transformer) is a \textbf{log-precision model} that all values in the model have $O(\log n)$ precision, where $n$ is the input length. This assumption aligns with reality since computer floating-point precision is typically 16, 32, or 64 bits, smaller than the sequence lengths commonly handled by sequence models. Based on this assumption, 
\note{there exists a class of POMDPs, for which achieving solutions with $(\TF,\RL)$ would demand an excessively large quantity of parameters.} \cam{This type of problem can be directly mapped to a type of circuit complexity, with its definition provided in the appendix \ref{sec:circuit}.}

\begin{theorem}
\label{thm:lim-in-tf-fit}
Assume $\TC\neq\NC$. Given an $\NC$ complete regular language $L$, for any depth $D$ and a \cam{any polynomial $\operatorname{poly}(n)$}, there exists a length $n$ such that no log-precision $(\TF,\RL)$ with depth $D$ and hidden dimension \cam{$d\le \operatorname{poly}(n)$} can solve $\mln{L}{n}$.
\end{theorem}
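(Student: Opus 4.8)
The argument combines the reduction of Proposition~\ref{prop:r-to-pomdp} with the known circuit-complexity upper bound for log-precision Transformers. First I would pin down the sense in which solving $\mln{L}{n}$ forces the model to decide $L$: in the POMDP $\ml{L}$ constructed for Proposition~\ref{prop:r-to-pomdp}, the observations $w_1 w_2 \ldots w_t$ spell out the input string, the transitions up to the terminal symbol $\#$ do not depend on the agent's actions, and the reward is assigned exactly when the final \emph{accept}/\emph{reject} action matches membership of $w_1\ldots w_t$ in $L$. Hence the unique optimal policy on $\mln{L}{n}$ reads the observed string and must output \emph{accept} iff $w_1\ldots w_t\in L$. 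Therefore a $(\TF,\RL)$ that \emph{solves} $\mln{L}{n}$ implements, at the terminal step, a Boolean function $f_n$ that equals $\mathbbm 1[\,\cdot\in L\,]$ on every string of length $\le n$; the $\RL$ head contributes only a pointwise $\arg\max$ over a constant-size action set, so on length-$n$ episodes the whole pipeline is a single log-precision, depth-$D$, width-$d$ Transformer computation on an input of length $O(n)$.

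Next I would invoke the simulation theorem for log-precision Transformers (Merrill \& Sabharwal; the $\TC$ upper bound cited in Section~2): any log-precision Transformer of constant depth $D$ and width $d$ on inputs of length $m$ is computed by a threshold circuit of depth $g(D)=O(1)$ and size $\mathrm{poly}(m,d)$. Applying this to the model solving $\mln{L}{n}$, with $m=O(n)$ and $d\le\mathrm{poly}(n)$, yields a threshold circuit $C_n$ of constant depth and polynomial size that decides $L$ on all strings of length $\le n$.

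Now suppose, for contradiction, that the theorem fails: there is a fixed depth $D$ and a polynomial such that for \emph{every} $n$ some such $(\TF,\RL)$ solves $\mln{L}{n}$. Running the previous two steps for all $n$ produces a family $\{C_n\}_n$ of constant-depth, polynomial-size threshold circuits recognizing $L$ (every string is handled by $C_n$ for all large enough $n$), i.e.\ $L\in\TC$. But $L$ is, by hypothesis, an $\NC$-complete regular language (e.g.\ the word problem $\sfive$, or the regular language $\nccomplete$), complete under constant-depth $\AC$ reductions; composing such a reduction with $\{C_n\}$ and using $\AC\subseteq\TC$ (and closure of $\TC$ under $\AC$ reductions) gives $\NC\subseteq\TC$, hence $\TC=\NC$, contradicting the hypothesis. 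Thus the claimed length $n$ exists.

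\textbf{Main obstacle.} Two points need care. First, turning ``solves the POMDP'' into ``the Transformer computes $\mathbbm 1[\,\cdot\in L\,]$'' requires checking that the autoregressive dependence on the agent's own past actions and rewards in an episode does not smuggle in extra computational power or spoil the precision/width bookkeeping; this is exactly why the construction behind Proposition~\ref{prop:r-to-pomdp} is arranged so that only the final action is payoff-relevant. Second, the quantifier/uniformity accounting in the last step: the circuits $C_n$ obtained per length form an a priori non-uniform family, so one must either read $\TC\neq\NC$ in the non-uniform sense or argue that the fixed architecture plus the uniform Transformer-to-$\TC$ construction keeps the family uniform enough to contradict the stated separation. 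The circuit-simulation step itself is a black-box citation and should not cause difficulty beyond confirming that the log-precision, constant-depth, polynomial-width regime is the one in force.
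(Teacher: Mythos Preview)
Your proposal is correct and follows essentially the same route as the paper: contradiction, reduce solving $\mln{L}{n}$ to deciding $L$ via the construction of Proposition~\ref{prop:r-to-pomdp}, then apply the Merrill--Sabharwal $\TC$ simulation (the paper's Lemma~\ref{lemma:tf-upperbound}) to place $L$ in $\TC$ and contradict $\NC$-completeness. The paper dispatches your first obstacle by a remark that the cited simulation lemma allows the feed-forward sublayer to be an arbitrary log-precision function, so the $\RL$ head is absorbed into the final FFN and the whole $(\TF,\RL)$ is treated as a single Transformer; your second obstacle (uniformity) is not explicitly addressed in the paper either, and your caveat about reading $\TC\neq\NC$ non-uniformly is apt.
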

\textbf{Proof idea.} At the heart of the proof is a contradiction achieved through circuit complexity theory~\cite{circuitcomplexity}. \citet{parallelism} has shown that $\TC$ circuits can simulate a log-precision Transformer with constant depth and polynomial hidden dimensions. Consequently, if $(\TF, \RL)$ can solve $\NC$ complete problems, it would cause both $\TC$ and $\NC$ complexities to collapse, a scenario generally deemed impossible~\cite{collapse}. Proof details of Theorem~\ref{thm:lim-in-tf-fit} are deferred to Appendix~\ref{subsec:proof-lim-in-tf-fit}. 

Following syntactic monoid theory \cite{semigroups} and Barrington's theorem \cite{barrington}, a significant number of regular languages are $\NC$ complete, such as the regular language $\nccomplete$ (cf. Appendix \ref{example:nc-complete}).

On the other hand, these two works inform us of another fact: for a regular language $L$, there are only two possibilities—either $L\in\NC$ complete or $L \in \TC$ (more specifically, $L\in\AC$). As of now, the question of whether problems solvable by log-precision Transformers belong to $\TC$ remains an open problem~\cite{parallelism}. However, numerous experimental results~\cite{chomsky,rsa} suggest that Transformers do not perform well in handling certain regular languages within $\TC$, such as $\parity$. In the next section, we demonstrate, from a generalization perspective, that Transformers cannot solve $\ml{L}$ for a broader range of regular languages $L$. 

\subsection{Limitations in Generalization: Solving $\ml{L}$}

When deploying the \porl algorithm, we anticipate it to demonstrate the ability of length generalization. 
In this subsection, we examine scenarios corresponding to $\mathcal{M}^L$ and no longer assume that the Transformer model operates with logarithmic precision.

Recent works~\cite{alibi,chomsky,randp} have empirically demonstrated that length extrapolation is a weakness of Transformers. Lemma~\ref{lemma:lim-in-tf-gen} theoretically indicates that for any Transformer with the dot-product softmax attention mechanism, robust generalization is not achievable as the input length increases.

\begin{lemma}[Lemma 5 in \citet{attentionlim}]
\label{lemma:lim-in-tf-gen}
Given a Transformer with softmax attention, let $n$ be the input length. If we change one input $u_i$ ($i < n$) to $u_i'$, then the change in the resulting hidden $x_n$ at the output layer is bounded by $O(D/n), D=\norm{u_i-u_i'}$ with constants depending on the parameter matrices.
\end{lemma}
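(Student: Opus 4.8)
The plan is to track how a single-coordinate perturbation $u_i \mapsto u_i'$ propagates through one self-attention layer, and then argue inductively over the (constant number of) layers. First I would fix attention at a single output position $x_n$ and write it as a convex combination $x_n = \sum_{j \le n} \alpha_j (W^V \tilde U_j)$ where $\alpha_j = \operatorname{softmax}_j(q_n^\top k_1, \dots, q_n^\top k_n)$ with $q_n = W^Q \tilde U_n$, $k_j = W^K \tilde U_j$. Changing $u_i$ to $u_i'$ affects $x_n$ through two channels: (a) the value vector $W^V \tilde U_i$ changes by $W^V(u_i - u_i')$, contributing a term weighted by $\alpha_i$; and (b) the key $k_i$ (and, if $i = n$, the query, but here $i < n$) changes, which perturbs \emph{all} the attention weights $\alpha_j$. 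Channel (a) contributes at most $\alpha_i \lVert W^V \rVert D$. Since the paper's $O(D/n)$ bound must hold uniformly, the key quantitative input is that $\alpha_i = O(1/n)$: softmax weights over $n$ bounded logits lie in a bounded ratio of one another, so each is $\Theta(1/n)$. That boundedness of logits is exactly where one must be slightly careful — it follows if $\lVert \tilde U_j \rVert$ (hence $q_n^\top k_j$) is bounded by a constant independent of $n$, which is the standing assumption on bounded inputs and parameters; I would state this assumption explicitly.

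For channel (b), I would bound $\lVert \nabla_{k_i} \alpha_j \rVert$. A direct computation gives $\partial \alpha_j / \partial k_i = \alpha_j(\mathbbm 1[i=j] - \alpha_i) q_n$, so the total change in $\sum_j \alpha_j (W^V \tilde U_j)$ coming from the key perturbation $\delta k_i = W^K(u_i'-u_i)$ is, to first order, $\sum_j \alpha_j(\mathbbm 1[i=j]-\alpha_i)(q_n^\top \delta k_i)(W^V \tilde U_j)$. Every term carries a factor $\alpha_i$ or $\alpha_j$, and $\alpha_i = O(1/n)$, while $\sum_j \alpha_j \lVert W^V \tilde U_j\rVert = O(1)$ and $|q_n^\top \delta k_i| \le \lVert W^Q\tilde U_n\rVert\,\lVert W^K\rVert D = O(D)$; so this channel is also $O(D/n)$. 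To make this rigorous rather than first-order, I would instead use the mean-value / Lipschitz form: softmax is globally Lipschitz, and the map $k_i \mapsto \alpha$ has Jacobian norm $O(\alpha_i) = O(1/n)$ uniformly along the segment between the two inputs (again using bounded logits), so $\lVert \alpha - \alpha' \rVert = O(D/n)$ and hence the induced change in $x_n$ is $O(D/n)$. Combining (a) and (b), one layer maps a perturbation of size $D$ at position $i<n$ to a perturbation of size $O(D/n)$ at position $n$.

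Finally I would handle depth and the pointwise sublayers. The MLP/skip-connection blocks and the $W^{Q,K,V}$ multiplications are all (locally) Lipschitz with constants depending only on the parameter matrices and the bounded activation region, so they multiply perturbation sizes by $O(1)$ and do not reintroduce $n$-dependence. For a depth-$D$ network one composes: the perturbation at the output of layer $1$ at positions $\ne i$ is already $O(D/n)$ (it stays $0$ only at position $i$ before the first attention, but after one attention layer every position mixes in the $O(1/n)$-weighted change), and subsequent layers are $O(1)$-Lipschitz, so the final $x_n$ changes by $O(D/n)$ with the constant absorbing the depth-$D$ product of layer Lipschitz constants. I would remark that this matches the cited statement (Lemma 5 of \citet{attentionlim}) and that the only genuine obstacle is the uniform $\alpha_i = O(1/n)$ estimate — i.e.\ making precise that bounded inputs and parameters force bounded attention logits, so that softmax cannot concentrate more than a constant factor away from uniform on any single coordinate; everything else is bookkeeping with Lipschitz constants.
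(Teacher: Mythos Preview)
The paper does not contain a proof of this lemma at all: it is quoted verbatim as ``Lemma 5 in \citet{attentionlim}'' and used as a black box in the proof of Theorem~\ref{thm:lim-in-tfrl-gen}. So there is no in-paper argument to compare your proposal against; you have supplied a proof where the authors simply cite one.

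That said, your sketch is essentially the right argument and matches the spirit of the original source: the core estimate is exactly that under bounded logits each softmax weight is $\Theta(1/n)$, so both the value channel and the key channel contribute $O(D/n)$ at the queried position $n$. Two small things are worth tightening. First, you reuse the symbol $D$ for network depth after it has been fixed as $\lVert u_i-u_i'\rVert$; pick a different letter. Second, your multi-layer paragraph is slightly garbled: after one attention layer the perturbation at position $i$ itself is still $O(D)$ (the residual connection alone guarantees this, and the query $q_i$ also moves by $O(D)$, shifting all of row $i$'s weights), not $O(D/n)$. The induction still closes because at every subsequent layer the contribution of position $i$ to position $n$ is damped by $\alpha_i=O(1/n)$, while the aggregate contribution of the remaining $n-1$ positions, each already perturbed by $O(D/n)$, is a convex combination and hence also $O(D/n)$; but you should state that invariant (``size $O(D)$ at index $i$, size $O(D/n)$ elsewhere'') explicitly rather than claiming all positions are $O(D/n)$ after one layer.
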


\begin{theorem}
\label{thm:lim-in-tfrl-gen} 
Given an regular language $L$, let $c(n,a)=\#\qty{xa\in L:\abs{x}=n}$. If there exists $a\in\Sigma$ such that $\qty{n:0<c(n,a)<\abs{\Sigma}^n}$ are infinite, and $\RL$ is a Lipschitz function, then $(\TF,\RL)$ cannot solve $\mathcal M^L$.
\end{theorem}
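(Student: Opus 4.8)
The plan is to combine Lemma~\ref{lemma:lim-in-tf-gen} (the $O(D/n)$ sensitivity bound for softmax attention) with the Lipschitz assumption on $\RL$ to argue that, as the input length grows, the composite map $(\TF,\RL)$ must eventually output (almost) the same action on two strings that differ in a single position — yet solving $\ml{L}$ requires those two strings to be classified differently. First I would fix the symbol $a\in\Sigma$ witnessing the hypothesis, so that the set $N=\{n : 0<c(n,a)<|\Sigma|^n\}$ is infinite. For each such $n$, the conditions $c(n,a)>0$ and $c(n,a)<|\Sigma|^n$ mean there exist strings $x,x'$ of length $n$ with $xa\in L$ but $x'a\notin L$. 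The key combinatorial step is to pass from this pair to a pair differing in only \emph{one} coordinate: by a standard "path" argument, since $x$ and $x'$ both have length $n$, one can interpolate through a sequence $x=z^{(0)},z^{(1)},\dots,z^{(k)}=x'$ where consecutive strings differ in a single symbol; along this path the membership of $z^{(j)}a$ in $L$ flips at least once, so there is an index $i<n$ and strings $y,y'$ with $y_j=y'_j$ for all $j\neq i$, such that $ya\in L$ and $y'a\notin L$ (or vice versa).

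Next I would translate this into the POMDP $\ml{L}$. Feeding the observation sequence corresponding to $y$ (resp.\ $y'$) followed by the terminal observation for $a$, the only optimal action at the final step is \textit{accept} for $ya\in L$ and \textit{reject} for $y'a\notin L$; a solving algorithm must therefore produce distinguishable outputs (the optimal-policy gap is a fixed positive constant $\delta$, independent of $n$, coming from the reward structure in Proposition~\ref{prop:r-to-pomdp}). But the two input sequences differ only in position $i$, and the per-symbol perturbation $D=\norm{u_i-u_i'}$ is bounded by a constant depending only on $|\Sigma|$ (the observation encoding is fixed). Lemma~\ref{lemma:lim-in-tf-gen} then gives $\norm{x_n - x_n'} = O(D/n) = O(1/n)$ for the Transformer's final hidden state, and composing with the $L_\RL$-Lipschitz map $\RL$ yields that the two action distributions differ by at most $L_\RL\cdot O(1/n)$. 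Choosing $n\in N$ large enough that $L_\RL\cdot O(1/n) < \delta$ contradicts the requirement that the outputs be $\delta$-separated; hence $(\TF,\RL)$ cannot solve $\ml{L}$.

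I would be slightly careful about two points. First, Lemma~\ref{lemma:lim-in-tf-gen} is stated for changing a single input $u_i$, which is exactly what the one-coordinate interpolation delivers — so the reduction to a single differing position is not merely convenient but necessary for invoking the lemma as a black box. Second, one must make sure the terminal observation and the action-readout layer are handled: the readout is a fixed Lipschitz function of $x_n$, and appending the common terminal symbol for $a$ only adds one more position, still leaving the sensitivity $O(1/n)$; I would absorb this into the constants. The main obstacle is the combinatorial interpolation step — making precise that from $0<c(n,a)<|\Sigma|^n$ one extracts a \emph{single-symbol} disagreement with differing $L$-membership, uniformly over infinitely many $n$, and verifying that the associated perturbation $D$ stays bounded independent of $n$. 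Everything else is a direct chaining of the sensitivity bound, the Lipschitz property, and the constant optimal-policy gap. Full details are deferred to the appendix.
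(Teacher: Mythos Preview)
Your proposal is correct and follows essentially the same approach as the paper: the paper isolates the single-coordinate combinatorial step as a separate lemma (proved by contradiction rather than your path/interpolation phrasing, but the content is identical), then chains Lemma~\ref{lemma:lim-in-tf-gen} with the Lipschitz property exactly as you do. Your treatment is in fact slightly more explicit about the constant optimal-policy gap $\delta$ and the terminal-symbol bookkeeping than the paper's own proof.
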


\textbf{Proof idea}. Since $\Sigma$ is a finite set, $D$ is deterministic. Then we will prove for $L$ satisfying the conditions, there exists infinite $u,u'$ such that $u$ and $u'$ differ by only $1$ positions but $u\in L,u'\notin L$. According to Lemma \ref{lemma:lim-in-tf-gen}, the hidden states $x$ and $x'$ output by the Transformer differ by $O(1/n)$. Since $\RL$ is a Lipschitz function, the results of $\RL(x)$ and $\RL(x')$ also differ by $O(1/n)$. As $n$ increases, \note{information from non-current time steps will only have a negligible impact on the output of \RL.} 
Proof details of Theorem~\ref{thm:lim-in-tfrl-gen} are deferred to Appendix~\ref{subsec:lim-in-tfrl-gen}.

Observing that $\parity$ satisfies the conditions outlined in Theorem \ref{thm:lim-in-tfrl-gen}, we can derive Corrollary \ref{cor:tf-no-parity}. 
\vspace{0.1in}
\begin{corollary}
\label{cor:tf-no-parity}
If $L=\parity$ and $\RL$ is a Lipschitz function, then $(\TF,\RL)$ can not solve $\ml{L}$.
\end{corollary}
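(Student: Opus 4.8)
The plan is to show that $\parity$ satisfies the hypothesis of Theorem~\ref{thm:lim-in-tfrl-gen} and then invoke that theorem directly. Recall that a binary string $w$ lies in $\parity$ exactly when the number of $1$'s in $w$ is even (so that the DFA in Figure~\ref{fig:reduction-parity} returns to $q_0$). We work with $\Sigma=\{0,1\}$, so $\abs{\Sigma}=2$, and we must exhibit a symbol $a\in\Sigma$ for which the set $\{n : 0 < c(n,a) < 2^n\}$ is infinite, where $c(n,a)=\#\{xa \in \parity : \abs{x}=n\}$.

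First I would compute $c(n,a)$ explicitly. Take $a=0$: then $xa=x0 \in \parity$ iff $x$ has an even number of $1$'s, and the number of length-$n$ binary strings with an even number of $1$'s is exactly $2^{n-1}$ for every $n\ge 1$. Hence $c(n,0)=2^{n-1}$, and for all $n\ge 1$ we have $0 < 2^{n-1} < 2^n$. Thus $\{n : 0 < c(n,0) < 2^n\}$ contains all positive integers and is in particular infinite. (The same works verbatim for $a=1$: $x1\in\parity$ iff $x$ has an odd number of $1$'s, again $2^{n-1}$ strings.) This verifies the combinatorial condition of Theorem~\ref{thm:lim-in-tfrl-gen}.

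With the hypothesis on $L=\parity$ established, and with $\RL$ assumed Lipschitz as in the statement of the corollary, Theorem~\ref{thm:lim-in-tfrl-gen} applies directly and yields that $(\TF,\RL)$ cannot solve $\ml{\parity}$, which is the claim. Concretely, unwinding the proof idea of the theorem: for each such $n$ one can, since $0 < c(n,0) < 2^n$, find two length-$(n{+}1)$ strings $u = x0$ and $u' = x'0$ agreeing in all but one coordinate with $u\in\parity$ and $u'\notin\parity$ (flip one bit of a prefix to toggle parity); the POMDP $\ml{\parity}$ forces the optimal policy at the terminal step to output \emph{accept} on $u$ and \emph{reject} on $u'$, hence the required $\RL$ outputs must differ by a constant margin, but Lemma~\ref{lemma:lim-in-tf-gen} bounds the change in the Transformer hidden state at position $n{+}1$ by $O(D/n)$ with $D=\norm{0-0}=0$ along the flipped $a$-coordinate — more precisely $D$ is the (deterministic, bounded) difference between the embeddings of the two differing symbols — so the composition $\RL\circ\TF$ changes by only $O(1/n)$, which is eventually smaller than the margin. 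This contradiction is inherited wholesale from Theorem~\ref{thm:lim-in-tfrl-gen}, so no separate argument is needed.

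The only real content here is the counting step, and it is elementary: the main (very minor) obstacle is simply to confirm that the strict inequalities $0 < c(n,a) < \abs{\Sigma}^n$ both hold — the lower bound because at least one parity class is nonempty, and the upper bound because not \emph{all} length-$n$ strings share the same parity once $n\ge 1$. Since $2^{n-1}$ is strictly between these bounds for every $n\ge 1$, there is nothing delicate, and the corollary follows immediately from Theorem~\ref{thm:lim-in-tfrl-gen}.
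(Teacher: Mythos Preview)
Your proposal is correct and follows exactly the paper's approach: verify that \parity satisfies the combinatorial hypothesis of Theorem~\ref{thm:lim-in-tfrl-gen} and then invoke that theorem, with the additional merit that you actually carry out the counting $c(n,0)=2^{n-1}$ explicitly rather than leaving it implicit. The only blemish is the momentary slip ``$D=\norm{0-0}=0$'' (the flipped position lies in the prefix $x$, not at the final symbol $a$), but you immediately self-correct to the intended statement that $D$ is the bounded embedding difference of the two differing symbols, so no substantive gap remains.
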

\vspace{-0.1in}
The Lipschitz property is commonly observed in widely used learning-based RL algorithms, such as employing MLPs to predict $Q$-values, $V$-values, or the probability distribution of the next action. For cases that do not satisfy the Lipschitz property, such as those relying on the maximum value rather than logits, \citet{overcoming} provides a constructive method for a Transformer that can recognize \note{\parity}. This scenario corresponds to the greedy policy based on $Q$-values. However, this theorem indicates that Transformers do not model sequences in a way that accurately reconstructs the \textit{real states}, \note{which makes it hard for $(\TF,\RL)$ to perform length extrapolation.}



\subsection{\note{From POMDPs to Regular Languages}}
\label{sec:pomdp-to-regular}
Through illustrating the limitations of $(\TF, \RL)$ in handling POMDPs derived from regular languages, we demonstrate that there exist POMDP problems for which Transformers cannot effectively learn the corresponding inductive biases. 

\cam{This class of POMDP problems corresponding to regular languages can be divided into three levels based on circuit complexity: $<\TC,[\TC,\NC),\NC$. The difficulty for Transformers to handle these problems increases progressively. This difficulty classification can be extended to existing POMDP problems. Please refer to Appendix~\ref{sec:discuss-pomdp} for detailed discussion.} 

\begin{itemize}
    \item \cam{$<\TC$: Most tasks that solely assess pure memory capabilities are weaker than $\TC$. These tasks only involve extracting a finite number of tokens from the past and performing simple logical operations with current observation information. Most memory tasks mentioned in \citet{tfshine} fall into this category. $(\TF,\RL)$ excel at solving such problems.} 
    \item \cam{$[\mathsf{TC}^0,\mathsf{NC}^1)$: This category already represents the vast majority of regular languages. The corresponding typical POMDPs are environments such as Passive Visual Match~\cite{hung2019optimizing} or Memory Maze~\cite{pasukonis2022evaluating}, where there is a need to infer the current position based on historical information. This is typically manifested in the requirement to reconstruct a relatively simple state from complex historical data.}
    \item \cam{$\NC$: Currently, no existing discrete-state POMDP problem has been found to correspond to this class of regular languages. According to Theorem~\ref{thm:lim-in-tf-fit}, it is difficult for $(\TF,\RL)$ to learn the optimal policy.} 
\end{itemize}

\cam{Establishing a direct connection with regular languages is not particularly straightforward in continuous scenarios. However, some standard POMDP scenarios, such as Pybullet Occlusion Task~\cite{recurrentbaseline}, are at least not in the first level. These tasks require inferring the current actual state based on contextual information.}

\note{Furthermore, the preceding discussion implies that for $(\TF, \RL)$, the hidden state fed to $\RL$ is often not the underlying \textit{real state}. 
In contrast, in subsequent experiments (cf. Section~\ref{sec:exp}), we observe that $(\texttt{RNN}, \RL)$ behaves differently and can implicitly reconstruct the \textit{real state}. The capability of recovering underlying \textit{real states} with the Markov property is believed to be a prerequisite for solving \porl. Therefore, \note{for POMDPs in general cases}, $(\TF, \RL)$ may encounter issues.}


\section{Combining Transformer and RNN}
\begin{table*}[htbp]
\vspace{0.2in}
\centering
\caption{\textbf{The recurrent representation for Transformer variants with pointwise recurrence.} We compare different Transformer variants: FART, FWP, RWKV and RetNet. $y_i,u_i$ are as defined in Section~\ref{sec:pre}, $s_i,z_i$ are hidden states, and the other variables are parameters. } 
\vspace{0.2in}
\begin{adjustbox}{width=0.8\textwidth}
\setlength{\tabcolsep}{10mm}{
\newsavebox\fartrecurrence
\begin{lrbox}{\fartrecurrence}
  \begin{minipage}{\columnwidth}
    $y_i=\operatorname{FFN}\qty(\frac{\phi(u_iW_Q)^\top}{\phi(u_iW_Q)^\top z_i}+u_i),\begin{aligned}
        s_i&=s_{i-1}+\phi\qty(u_iW_k)(u_i W_V)^\top\\
        z_i&=z_{i-1}+\phi\qty(u_iW_k)\\
    \end{aligned}$ 
  \end{minipage}
\end{lrbox}

\newsavebox\rwkvrecurrence
\begin{lrbox}{\rwkvrecurrence}
  \begin{minipage}{\columnwidth}
    $y_i=\operatorname{Gate}\qty(\frac{s_{i-1}+\ee^{v+W_ku_i}\odot u_t}{z_{i-1}+\ee^{v+W_ku_i}}),\begin{aligned}
        s_i&=\ee^{-w}\odot s_{i-1}+\ee^{W_ku_i}\odot u_i\\
        z_i&=\ee^{-w}\odot z_{i-1}+\ee^{W_ku_i}\\
    \end{aligned}$ 
  \end{minipage}
\end{lrbox}

\newsavebox\fwprecurrence
\begin{lrbox}{\fwprecurrence}
  \begin{minipage}{\columnwidth}
    $y_i=\frac{1}{z_i\phi\qty(W_qu_i)}W_i\phi\qty(W_qu_i),\begin{aligned}
        W_i&=W_{i-1}+(W_vu_i)\otimes\phi\qty(W_ku_i)\\
        z_i&=z_{i-1}+\phi\qty(W_ku_i)\\
    \end{aligned}$ 
  \end{minipage}
\end{lrbox}

\newsavebox\retnetrecurrence
\begin{lrbox}{\retnetrecurrence}
  \begin{minipage}{\columnwidth}
    $y_i=\qty(\tau\qty(XW_G)\odot\operatorname{GN}(z_i)),z_i=\gamma z_{i-1}+(Ku_i)^\top(Vu_i)$
  \end{minipage}
\end{lrbox}

\begin{tabular}{lc}
\toprule
Architectures & Recurrent Representation for a Single Head \\
\midrule
FART~\cite{fart} & \usebox{\fartrecurrence}  \\
FWP~\cite{fwp} & \usebox{\fwprecurrence} \\
RWKV~\cite{rwkv} & \usebox{\rwkvrecurrence} \\
RetNet~\cite{retnet} & \usebox{\retnetrecurrence} \\
\bottomrule
\end{tabular}

}
\end{adjustbox}
\label{tab:current-tf}
\end{table*}

\note{
From the analysis in Section~\ref{sec:lim-of-tf-in-porl}, it becomes evident that RNN-like models (LSTM, GRU, RNN) emerge as promising sequence model choices for \porl. 
} \cam{There has been considerable theoretical work demonstrating their completeness on regular languages~\cite{merrill2019sequential,korsky2019computational}. For cases of log precision, based on definitions, we can directly map the recurrent units of RNNs to transition functions in DFAs. Therefore, RNNs do not suffer from the theoretical constraints encountered by Transformers.}

However, RNN-like models face the challenge of rapid memory decay~\cite{tfshine, gtrxl}, leading to an inferior performance on POMDP problems that demand long-term memory when compared to Transformers~\cite{gtrxl,tfshine}.

\note{Another insight from the previous section is that the attention mechanism of Transformers is primarily to blame for their limitations (see Figure~\ref{fig:attention}). As articulated in ~\citet{parallelism}, there exists a trade-off between the highly parallel structure of Transformers and their computational capacity.}

\note{To \cam{alleviate} these limitations of Transformers, a natural idea is to endow Transformers with the ability of pointwise recurrence (see Figure~\ref{fig:recurrent}).} This line of development has been the focus of numerous efforts, resulting in several Transformer variants that incorporate this mechanism, as detailed in Table \ref{tab:current-tf}. 
The shared feature of these methods in their recurrence representation for a single head can be found in the simple linear operations they employ, such as $x_t = \lambda x_{t-1} + u_t$. While the non-linear components can be amortized across the layers through the FFN between layers. If the number of heads is set equal to the dimension of the hidden state $h$, then \begin{align}
    \vb{x}_t=\vb{\Lambda}\vb{x}_{t-1}+\vb{u}_t\;,\label{eq:linearattn}
\end{align}
where $\vb{\Lambda}=\operatorname{diag}\qty(\lambda_1,\ldots,\lambda_h)$. In RetNet, operations involving $x_t$, $\lambda_i$, and $u_i$ are performed over $\mathbb{C}$, while the remaining operations are carried out over $\RR$. As for RWKV, $\lambda_i$ is a learnable parameter, while in the rest variants are hyperparameters.

From the perspective of RNN, if we linearize and diagonalize the RNN's recurrent unit $\vb{x}_t=\vb{A}\vb{x}_{t-1}+\vb{B}\vb{u}_t$, we obtain the following form: \begin{align}
\vb{\tilde{x}}_t=\vb{\Lambda}\vb{\tilde{x}}_{t-1}+\vb{\tilde{u}}_t\label{eq:linearrnn}\;,
\end{align}
where $\vb{A}=\vb{P}\vb{\Lambda}\vb{P}^{-1},\vb{\tilde{x}}_t=\vb{P}^{-1}\vb{x}_t,\vb{\tilde{u}}_t=\vb{P}^{-1}\vb{B}\vb{u}_t$. Since almost all matrices can be diagonalized over $\mathbb{C}$, the operations mentioned above are defined in $\mathbb{C}$ \cite{probability}.

Comparing (\ref{eq:linearattn}) and (\ref{eq:linearrnn}), the pointwise-recurrence Transformer can be viewed as a linear RNN with certain constraints, and the linear RNN serves as a balance point between Transformers and RNNs. To summarize, we expect linear RNN to be more suitable as a sequence model in \porl for the following reasons. 

\textbf{Regular language.} Many studies suggest that the recurrence with non-linear activation functions plays a crucial role in the completeness of RNNs in regular languages~\cite{turing}, while linear RNNs may lose this completeness. However, some researches indicate that linear RNNs can effectively approximate RNNs \cite{rsa,parallelizing} and perform well on formal language tasks similar in form to NLP \cite{rsa,irie2023practical}. Compared to Transformers, their inductive biases are closer to HMMs. In subsequent experiments (cf. Section~\ref{sec:exp}), we validate that $(\texttt{LRNN}, \RL)$ can implicitly learn the states in POMDPs.

\textbf{State space model.} Linear RNNs have been proven to efficiently fit partially observable \textit{linear dynamic systems}~\cite{lti}. \note{While the transformer's fitting capability has theoretical proofs only under certain specific conditions}~\cite{tf-lti,tf-algo}, with no similar conclusion for more general situations. The linear dynamic system can be considered as a first-order approximation of a state space model, indicating the potential of linear RNNs in addressing a broader range of POMDPs.

\textbf{Long term memory.} The primary reason for the long-term dependency issues in RNNs is the challenge of gradient explosion or vanishing when input length increases during training~\cite{rnn-diffult}. Transformers, due to their parallel structure, are less susceptible to this issue. To mitigate this problem, gate mechanisms are introduced to RNNs~\cite{gru,lstm}. However, \citet{prevent} indicates that the non-linear recurrence is the primary cause of gradient explosions. For linear RNNs, effectively managing the range of parameters $\lambda_i$ between $[0,1]$ during initialization successfully addresses both gradient explosion and vanishing issues. This has been validated in certain supervised learning tasks with long-term dependencies~\cite{linearrnn,s4}.




\section{Experiments}
\label{sec:exp}
In this section, we compare the effectiveness of three different sequence models --- Transformer, RNN, and \note{linear RNN} --- in addressing partially observable decision-making problems within the realm of 
\note{\porl}
(\texttt{SEQ}, \texttt{RL}). 
\note{We choose GPT~\citep{GPT2}, LSTM~\citep{lstm}, and LRU~\citep{linearrnn} as the representative architectures for these three types of models.}
To substantiate our hypotheses, we conduct experiments in three distinct POMDP scenarios, detailed in Sections~\ref{exp:r} to \ref{exp:long-term}. These experiments are designed to assess the models from various perspectives: 1) POMDPs derived from certain regular languages, including \texttt{EVEN PAIRS}, \parity, and \sfive; 2)
\note{tasks from Pybullet Partially Observable environments~\citep{recurrentbaseline} that require the ability of state space modeling;}
3) tasks that require pure long-term memory capabilities, such as Passive T-Maze and Passive Visual Match~\cite{longtimememory}. Comprehensive implementation details, task descriptions, and supplementary results are presented in Appendix~\ref{sec:impl}. \cam{We also conduct a comparison with some published Transformer in RL there.}


\subsection{\note{POMDPs Derived from Regular Languages}}
\label{exp:r}

\note{
We construct this type of POMDP problem following the approach of Proposition~\ref{prop:r-to-pomdp}, and use DQN~\cite{ddqn} as the RL component in $(\SEQ, \RL)$. \cam{Three regular language tasks correspond to the difficulty classification in Section \ref{sec:pomdp-to-regular}.} 
} The learning curves are shown in Appendix~\ref{appendix:supplementary}, Figure~\ref{fig:regular-learning-curve}. \cam{We provide the experimental results on length extrapolation and model scale for this task in the appendix, where
Theorem \ref{thm:lim-in-tf-fit} and Theorem \ref{thm:lim-in-tfrl-gen} are validated.
} 

To look into how they model the regular languages, we visualize the hidden states in Figure~\ref{fig:regular-hidden}.  Generally, all three sequence models can fit scenarios with short lengths. However, as the input length increases, LSTM exhibits the best fitting capability, followed by LRU, and GPT performs the least effectively. We observe that in POMDP tasks derived from the three regular languages, the distinct nature of these languages yields varied results:

\begin{figure}[ht]
\vskip 0.1in
\caption{\textbf{Hidden state for regular language tasks}. We visualize the hidden states of each sequence model during evaluation at length 25 using t-SNE~\cite{tsne} and annotate them according to their real states. Our classification corresponds to the state the observation history maps to in the reduced POMDP, namely $(q, w)$, while `T' stands for the terminal state. \note{The states with similar colors in the diagram generally produce the same type of observation.}}
\vskip 0.0in
\begin{center}
\subfloat[\texttt{EVEN PAIRS}]{
    \includegraphics[width=\columnwidth]{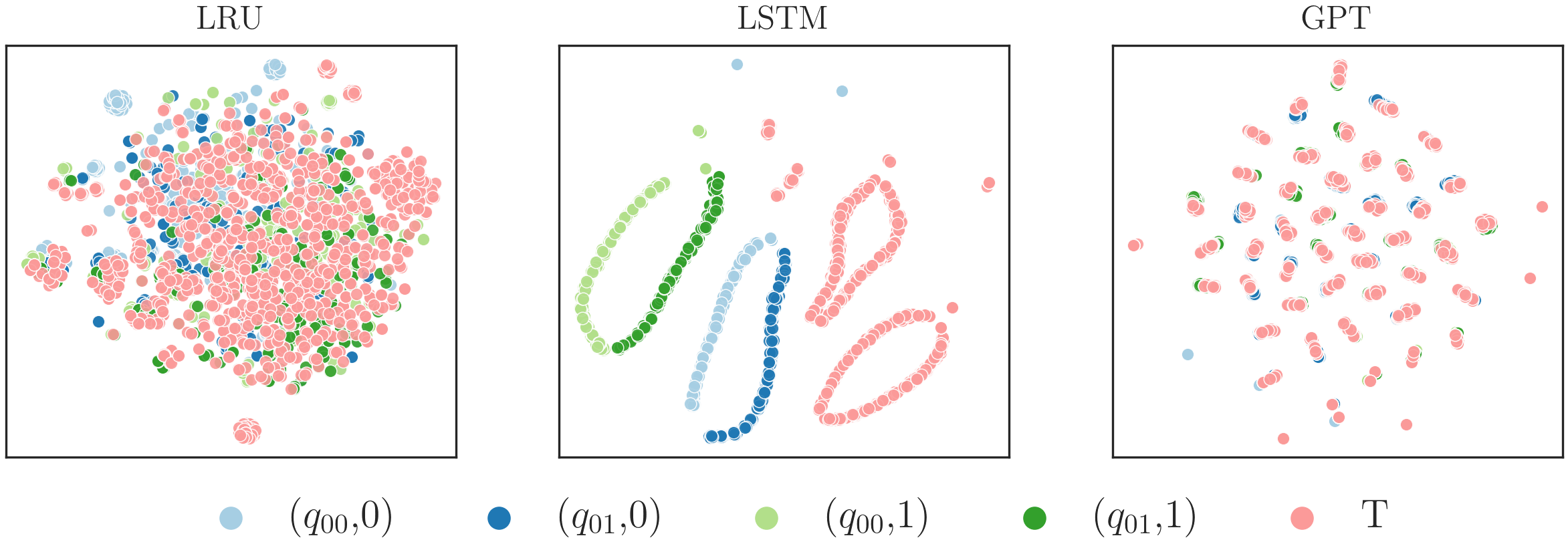}
}
\newline
\subfloat[\parity\label{fig:regular-hidden:parity}]{
    \includegraphics[width=\columnwidth]{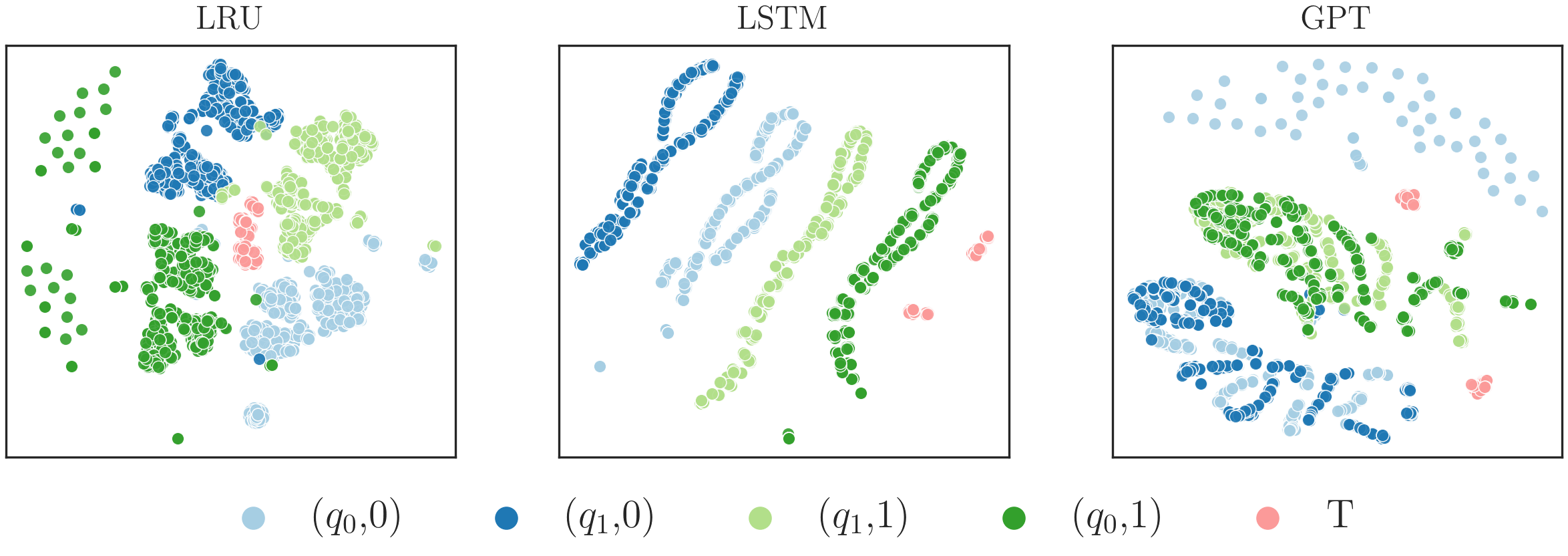}
}
\newline
\subfloat[\sfive]{
    \includegraphics[width=\columnwidth]{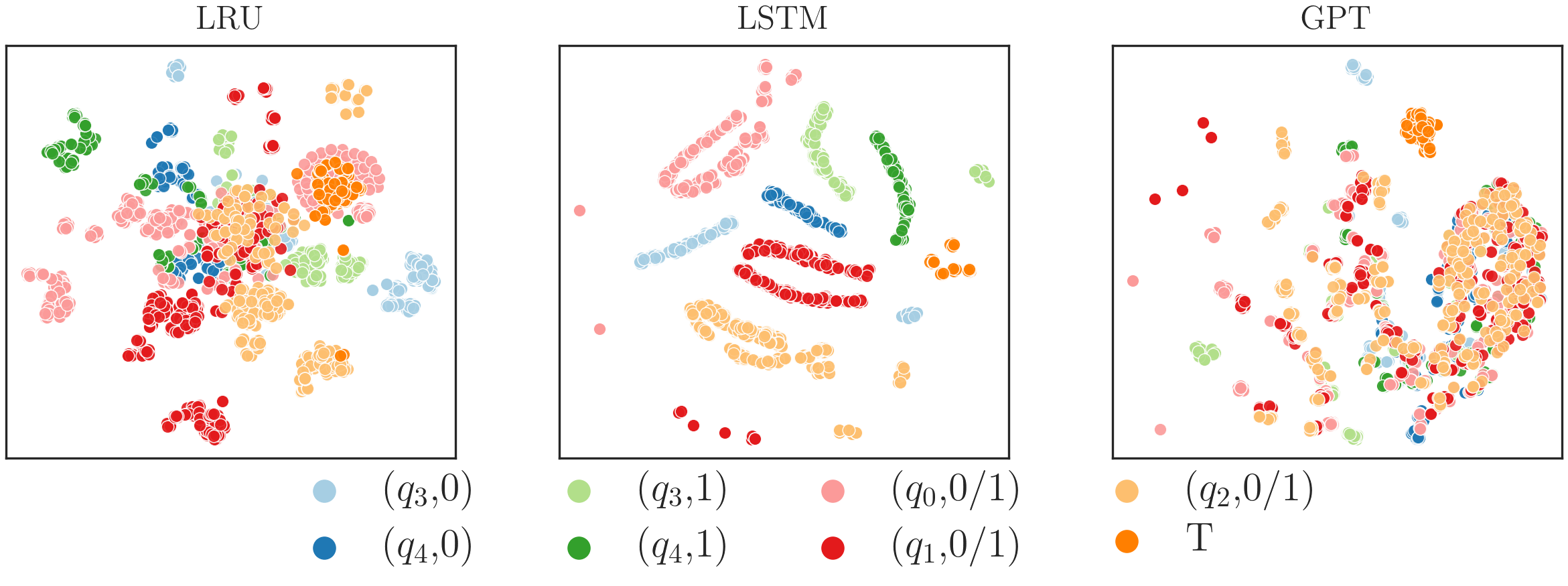}
}
\label{fig:regular-hidden}
\end{center}
\vskip -0.5in
\end{figure}

\textbf{\texttt{EVEN PAIRS}} is a specific regular language that could be directly solved by memorizing the first character and comparing it with the last character, which aligns with the inductive bias of the attention mechanism. As a result, GPT solves $\mathcal M^\text{\texttt{EVEN PAIRS}}$ reasonably well.

\textbf{\parity} \note{is a regular language with simple DFA in $\TC$.}
As shown in Figure~\ref{fig:regular-hidden:parity}, LSTM and LRU are capable of accurately modeling $\mathcal M^\parity$. \note{Through colors, it can be observed that the hidden state of the transformer is almost solely distinguished based on the current observation. It relies on processing the entire history through attention after encountering a terminal symbol. This is more like memorizing all the different strings, resulting in lower final returns.}

\textbf{\sfive} is a $\NC$ complete regular language as mentioned in Section~\ref{subsec:fitting}, and we have shown the inability of GPT to solve $\mathcal M^{\sfive}(n)$ in Theorem~\ref{thm:lim-in-tf-fit}. Experimental results align with our claim, proving that GPT performs worst in this task and fails to recover the true state.

\cam{}



\begin{figure*}[htbp]
\vskip 0.2in
\begin{center}
\centerline{\includegraphics[width=0.8\textwidth]{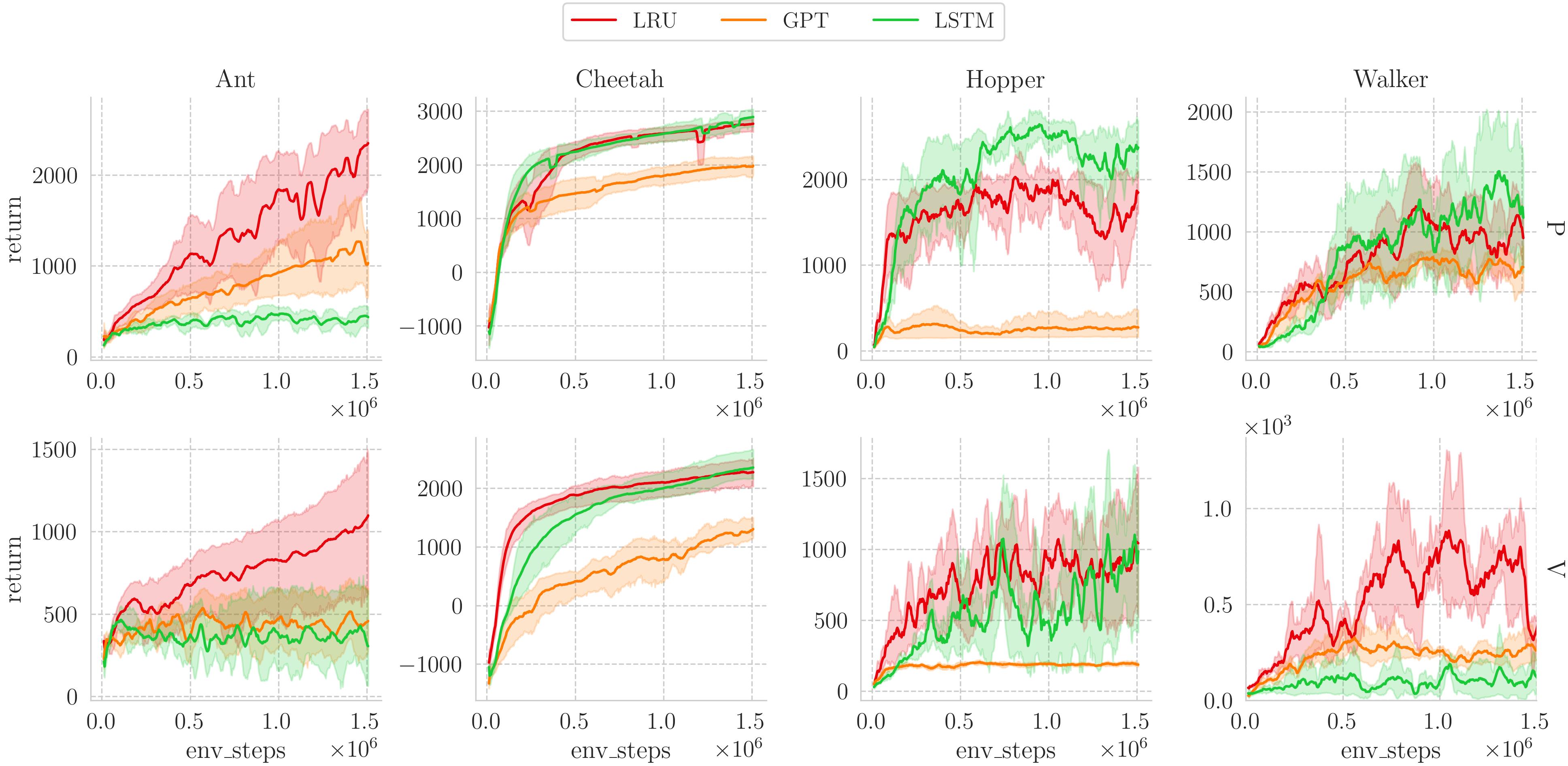}}
\caption{\textbf{Learning curves for PyBullet occlusion tasks}. Mean of 5 seeds. The shaded area indicates $95$\% confidence intervals.}
\label{fig:pomdp-learning-curve}
\end{center}
\vskip -0.1in
\end{figure*}

\begin{table}[htbp]
\centering
\caption{\textbf{Normalized scores for PyBullet occlusion tasks}. We compare different sequence models LRU, GPT and LSTM. `V' refers to `only velocities observable', and `P' refers to `only positions observable'. We present normalized scores defined in Appendix~\ref{appendix:supplementary}, Equation~\ref{eq:normalized_score}. \mhl{Blue} highlight indicates the highest score, and \mlhl{orange} highlight indicates the second-highest score.}
\vspace{0.1in}
\begin{adjustbox}{width=\columnwidth}
\begin{tabular}{cccccc}
\toprule
Task & Type & LRU & GPT & LSTM \\ \midrule

\multirow{2}{*}{Ant} & V & \mhl{\po\po\po 29.8}\textcolor{gray}{\mhl{ $\pm$ 20.4\po\po\po }} & \mlhl{\po\po\po\po 9.4}\textcolor{gray}{\mlhl{ $\pm$ 7.1\po\po\po\po }} & {\po\po\po\po 7.4}\textcolor{gray}{{ $\pm$ 8.8\po\po\po\po }} \\
 & P & \mhl{\po\po\po 81.2}\textcolor{gray}{\mhl{ $\pm$ 28.7\po\po\po }} & \mlhl{\po\po\po 38.2}\textcolor{gray}{\mlhl{ $\pm$ 26.0\po\po\po }} & {\po\po\po\po 5.7}\textcolor{gray}{{ $\pm$ 3.3\po\po\po\po }} \\
\multirow{2}{*}{Cheetah} & V & \mlhl{\po\po\po 96.8}\textcolor{gray}{\mlhl{ $\pm$ 8.1\po\po\po\po }} & {\po\po\po 69.3}\textcolor{gray}{{ $\pm$ 6.9\po\po\po\po }} & \mhl{\po\po\po 98.1}\textcolor{gray}{\mhl{ $\pm$ 8.3\po\po\po\po }} \\
 & P & \mlhl{\po\po 109.9}\textcolor{gray}{\mlhl{ $\pm$ 4.2\po\po\po\po }} & {\po\po\po 88.8}\textcolor{gray}{{ $\pm$ 6.3\po\po\po\po }} & \mhl{\po\po 112.5}\textcolor{gray}{\mhl{ $\pm$ 5.4\po\po\po\po }} \\
\multirow{2}{*}{Hopper} & V & \mhl{\po\po\po 94.1}\textcolor{gray}{\mhl{ $\pm$ 23.4\po\po\po }} & {\po\po\po 13.5}\textcolor{gray}{{ $\pm$ 0.5\po\po\po\po }} & \mlhl{\po\po\po 82.5}\textcolor{gray}{\mlhl{ $\pm$ 37.1\po\po\po }} \\
 & P & \mlhl{\po\po 147.9}\textcolor{gray}{\mlhl{ $\pm$ 12.3\po\po\po }} & {\po\po\po 23.8}\textcolor{gray}{{ $\pm$ 18.0\po\po\po }} & \mhl{\po\po 184.1}\textcolor{gray}{\mhl{ $\pm$ 13.4\po\po\po }} \\
\multirow{2}{*}{Walker} & V & \mhl{\po\po\po 61.7}\textcolor{gray}{\mhl{ $\pm$ 14.6\po\po\po }} & \mlhl{\po\po\po 22.3}\textcolor{gray}{\mlhl{ $\pm$ 6.0\po\po\po\po }} & {\po\po\po 12.2}\textcolor{gray}{{ $\pm$ 7.0\po\po\po\po }} \\
 & P & \mlhl{\po\po\po 79.3}\textcolor{gray}{\mlhl{ $\pm$ 23.1\po\po\po }} & {\po\po\po 49.5}\textcolor{gray}{{ $\pm$ 4.8\po\po\po\po }} & \mhl{\po\po\po 94.6}\textcolor{gray}{\mhl{ $\pm$ 36.6\po\po\po }} \\
\midrule
\multicolumn{2}{c}{\textbf{Average}} & \mhl{\po\po88.2\po\po} & {\po\po39.3\po\po} & {\po\po74.6\po\po} \\
\bottomrule
\end{tabular}
\end{adjustbox}
\label{tab:pomdp-task}
\end{table}

\begin{figure}[htbp]
\vspace{0.1in}
\begin{center}
\includegraphics[width=\columnwidth]{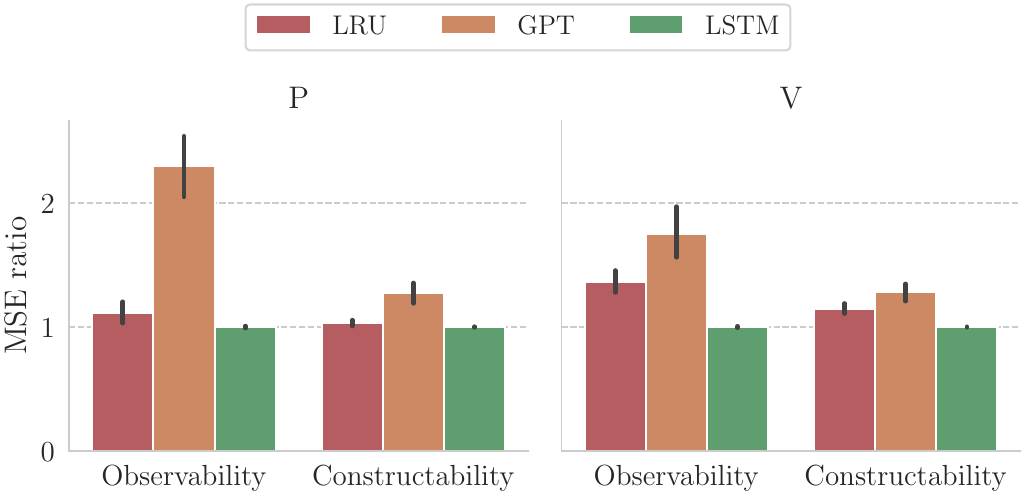}
\caption{\textbf{Mean Squared Error (MSE) ratios for \ssm tasks}. `V' refers to `only velocities observable', and `P' refers to `only positions observable'. To enable a comparative analysis of the performance among the three models, we present the MSE ratio, as defined in Appendix~\ref{appendix:supplementary}, Equation~\ref{eq:mse_ratio}.}
\label{fig:ssm}
\end{center}
\vskip 0in
\end{figure}

\subsection{\note{PyBullet Partially Observable Environments}}
\label{exp:pomdp}
We conduct experiments on 8 partially observable environments, which are all PyBullet locomotion control tasks with parts of the observations occluded~\citep{recurrentbaseline}, and denote them as PyBullet Occlusion. These experiments encompass four distinct tasks: Ant, Cheetah, Hopper, and Walker, and we evaluate the models based on two types of observations: Velocities Only (V) and Positions Only (P). The normalized scores are demonstrated in Table~\ref{tab:pomdp-task}, and we also provide learning curves in Figure~\ref{fig:pomdp-learning-curve}. From the results, it is evident that LRU and LSTM outperforms GPT in all eight tasks, matching our claim that the Transformer architecture struggles at modeling partially observable sequences. \cam{The results showing that LSTM outperforms GPT are also verified in \citet{tfshine}.} 

Moreover, the general performances of LRU and LSTM are notably comparable, and LRU significantly outperforms LSTM in certain tasks, namely Ant (P, V), and Walker (V). Such results demonstrate that after linearization, recurrent-based models can still effectively retain their capacity to model the sequence, and can serve as a well-rounded balance integrating the strengths of both Transformer and RNN architectures.

We conduct ablation experiments with full observability in Appendix~\ref{appendix:supplementary}, Table~\ref{tab:pomdp-task-fully-obs}, and the overall performances of the three models are close, affirming that GPT's inferior performance in POMDP scenarios stems from partial observability rather than other factors.

\begin{figure*}[htbp]
\vskip -0.1in
\begin{center}
\subfloat[Passive T-Maze]{
    \includegraphics[width=0.8\columnwidth]{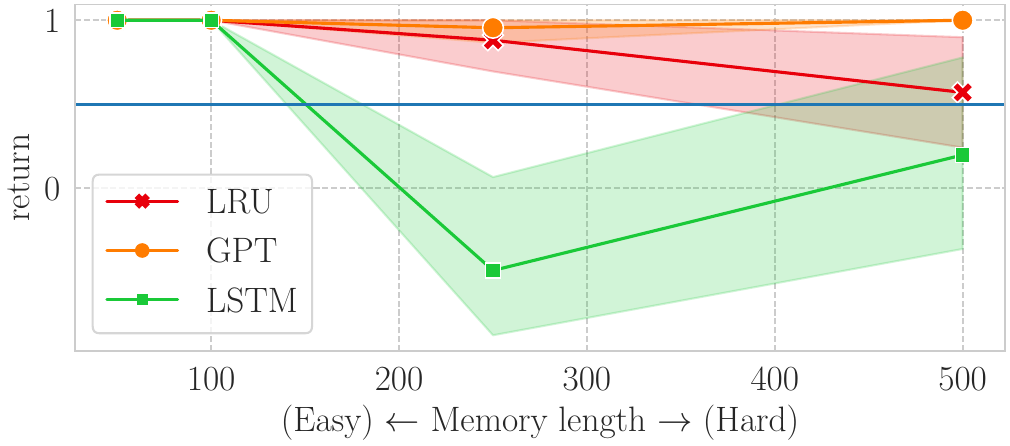}
}
\subfloat[Passive Visual Match]{
    \includegraphics[width=0.8\columnwidth]{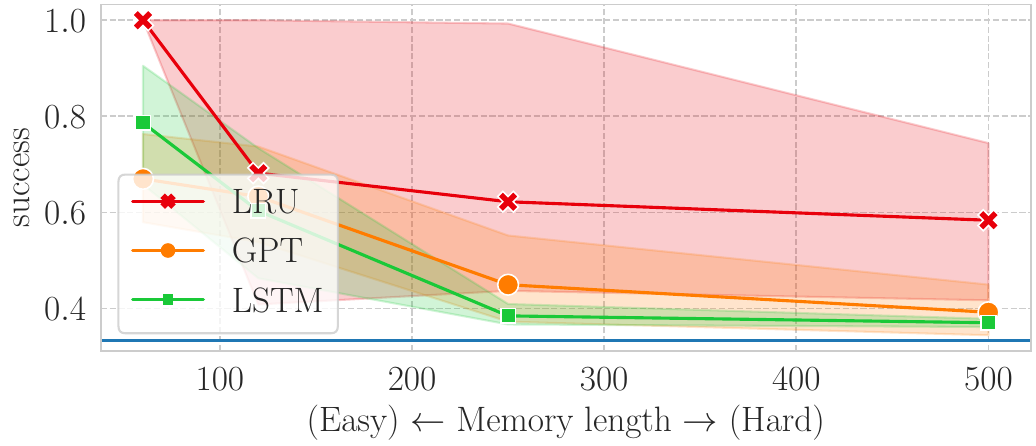}
}
\caption{\textbf{Results in pure long-term memory environments with varying memory lengths}. \textcolor{blue}{Blue} line indicates the return of optimal markov policy, which only has access to the observation.}
\label{fig:long-agg}
\end{center}
\vskip -0.1in
\end{figure*}

To enhance our understanding of the capability to extract state information from observation sequences, we meticulously crafted two tasks. These tasks are aimed at determining the initial state, termed ``Observability'', and forecasting the current state, referred to as ``Constructability'', using historical observation sequences, and we adopt Mean Square Error (MSE) as our training target. Our experiments were conducted on the D4RL medium-expert dataset~\cite{D4RL} of the aforementioned tasks, and the results (illustrated in Figure~\ref{fig:ssm}) are presented as the average MSE ratios across these tasks. The findings reveal that, in both scenarios, GPT is notably less competent compared to the other two models. In contrast, the LRU model demonstrates capability on par with the LSTM model. This observation lends further support to our hypothesis that GPT's ability to reconstruct states from partially observable sequences is worse than that of the recurrent-based models.

\subsection{\note{Pure Long-term Memory Environments}}
\label{exp:long-term}
Results for pure long-term memory environments, namely Passive T-Maze and Passive Visual Match, are provided in Figure~\ref{fig:long-agg}, and learning curves are shown in Appendix~\ref{appendix:supplementary}, Figure~\ref{fig:memory-learning-curve}. In these experiments, we follow the work of~\citet{tfshine}, which tests the long-term memory ability of Transformer-based agent and LSTM-based agent on two memory-demanding tasks. We observe that LRU performs comparably to GPT, while significantly outperforming LSTM. Furthermore, LRU beats GPT on Passive Visual Match, the harder task of the two which involves a complex reward function~\cite{longtimememory}, showcasing its powerful long-term memory capability.

\section{Conclusion}
In this work, we challenge the suitability of Transformers as sequence models in \porl. Through theoretical analysis and empirical evidence, we reveal Transformer's limitations in solving POMDPs, particularly their struggle with modeling regular languages, a key aspect of POMDPs. As a remedy to these issues, We propose LRU as a more effective alternative, combining the strengths of recurrence and attention. Supported by extensive experiments, our findings challenge the prevailing use of Transformers in sequential decision-making tasks, and open new avenues for exploring recurrent structures in complex, partially observable environments.

It is also important to acknowledge the limitations of our work. After introducing recurrence, LRU serves as a choice to combine the advantages of Transformer and RNN, while still lacking theoretical guarantees for modeling regular languages. Although LRU demonstrates satisfactory performance in experiments, there remains a need for further exploration in this direction. \cam{Additionally, the theoretical analysis in this paper focuses more on the exploitation aspect of RL, while lacking discussion on exploration. Complex POMDP tasks not only require suitable sequence models but also need to be paired with appropriate RL algorithms.}


\section*{Impact Statement}
Our work revisits the application of Transformers in RL, aiming to advance the development of decision intelligence. If misused in downstream tasks, it has the potential to lead to adverse effects such as privacy breaches and societal harm. Nevertheless, this is not directly related to our research, as our primary focus is on theoretical investigations.
\bibliography{icml2024}
\bibliographystyle{icml2024}
\newpage
\appendix
\onecolumn
\section{Additional Background and Notation}
\subsection{Circuit complexity}
\label{sec:circuit}
In this subsection, We introduce several basic complexity classes, namely $\AC$, $\TC$, and $\NC$:
\begin{itemize}
    \item $\AC$ contains all languages that are decided by Boolean circuits with constant depth, unbounded fan-in, and polynomial size, consisting of AND gates, OR gates, NOT gates;
    \item $\TC$ is $\AC$ with majority gates which outputs true if and only if more than half of the input bits are true;
    \item $\NC$ contains all languages that are decided by Boolean circuits with a logarithmic depth of $\mathcal O(\log n)$ where $n$ is the input length, constant fan-in, and polynomial-size, consisting of AND gates, OR gates, and NOT gates.
\end{itemize}

The relationships between them are $\AC\subseteq\TC\subseteq \NC$, and it is commonly conjectured that $\TC\ne\NC$ whereas it remains an open problem in the computation complexity theory. A language $L\in \NC$ is $\NC$ complete w.r.t. $\AC$ reduction if for any $L'\in \NC$, $L'\le_{\text{strong}}L$, \textit{i.e.} $L'$ is reducible to $L$ under $\AC$ reduction. More details can be referred to \citet{semigroups}.
\subsection{$\NC$ complete regular language}
In this subsection, we introduce the approach of connecting regular languages and $\NC$ complete problems using the syntactic monoid theory and Barrington's theorem.

\subsubsection{Syntactic monoid}
The \textit{syntactic monoid} is a concept in the algebraic language theory that establishes a connection between the language recognition and the group theory.

\begin{definition}[Syntactic congruence \cite{semigroups}] Let $A$ be a finite alphabet, and let $L\subseteq A^*$. We define an equivalence relation $\equiv_L$ on $A^*$: $x\equiv_L y$ iff. 
\[
\qty{(u,v)\in A^*\times A^*:uxv\in L}=\qty{(u,v)\in A^*\times A^*:uyv\in L}\;.
\]
Note that $xa\equiv_L ya,ax\equiv_L ay$ if $x\equiv_L y,a\in A$, it follows that $\equiv_L$ is a congruence on $A^*$, called the \textit{syntactic congruence}.
\end{definition}

\begin{definition}[Syntactic monoid~\cite{semigroups}] 
Given a language $L\subseteq A^*$, the quotient of $A^*$ by its congruence $\equiv_L$ is called the \textit{syntactic monoid} of $L$ and is denoted as $M(L)$.
\end{definition}

For a regular language $L$, determining $M(L)$ can be accomplished using a straightforward method \cite{syntactic}. The procedure involves initially computing its minimal DFA, with the syntactic semigroup of $L$ being equivalent to the transition semigroup $S$ of the DFA.

\subsubsection{Barrington’s Theorem}
\citet{barrington} demonstrated that the word problem of the group $S_5$ is $\NC$ complete. The word problem of a group $G$ is defined as $\qty{g_1 \ldots g_n = e: g_i \in G}$. The following theorem offers a comprehensive statement of Barrington's work.

\begin{theorem}[Theorem IX.1.5 in \citet{semigroups}]
    \label{thm:barrington}
    Given a regular language such that $M(K)$ is not solvable. Then for all $L\in\NC, L\le_\text{strong} K$.
\end{theorem}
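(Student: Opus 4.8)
I would prove the theorem by establishing the two halves of Barrington's theorem and composing them: first that every $L\in\NC$ strong-reduces to the word problem of $S_5$ (Barrington's simulation of bounded-depth circuits by polynomial-length permutation branching programs), and then that the word problem of any non-solvable group sitting inside $M(K)$ strong-reduces to membership in $K$; since strong reductions are closed under composition, this yields $L\le_\text{strong}K$. To set up the first part, start from $L\in\NC$, decided by a uniform family of polynomial-size, fan-in-$2$, depth-$O(\log n)$ Boolean circuits. Unwinding to a formula (size still polynomial) and applying the standard depth-rebalancing argument, I may assume it is a formula of depth $d=O(\log n)$; pushing negations to the leaves by De~Morgan gives a monotone formula of depth $O(\log n)$ over positive and negated literals. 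This reshaping is within the power of a strong reduction, so it suffices to simulate such formulas.

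\textbf{Step 2 (Barrington's simulation).} Work with programs over $S_5$: sequences of instructions $(j_t,g_t^0,g_t^1)$ with $j_t\le n$ and $g_t^b\in S_5$, whose output on $x$ is $\prod_t g_t^{x_{j_t}}$. Say such a program \emph{$\sigma$-computes} a Boolean function $f$ (for a fixed $5$-cycle $\sigma$) if its output is $\sigma$ when $f(x)=1$ and the identity $e$ when $f(x)=0$. I would show by induction on the formula depth $d$ that every depth-$d$ formula is $\sigma$-computed, for \emph{every} $5$-cycle $\sigma$, by a program of length at most $4^d$: a literal is a length-$1$ program; re-targeting from $\sigma$ to $\tau=\theta\sigma\theta^{-1}$ is done by conjugating every $g_t^b$ by $\theta$ (legal since all $5$-cycles are conjugate in $S_5$); negation is handled by appending one instruction with the constant value $\sigma^{-1}$; and conjunction $f\wedge g$ uses the commutator trick — fixing $5$-cycles $\alpha,\beta$ with $[\alpha,\beta]$ again a $5$-cycle (such a pair exists in $A_5$ by direct computation), concatenate inductively built programs $P_\alpha,P_\beta,P_{\alpha^{-1}},P_{\beta^{-1}}$ that $\alpha$-, $\beta$-, $\alpha^{-1}$-, $\beta^{-1}$-compute $f,g,f,g$; a short case analysis shows the product is $[\alpha,\beta]$ exactly when $f=g=1$ and $e$ otherwise, and re-targeting completes the step with length at most $4\cdot4^{d-1}=4^d$. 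Disjunction reduces to this via $f\vee g=\neg(\neg f\wedge\neg g)$. With $d=O(\log n)$ the length is $4^{O(\log n)}=\mathrm{poly}(n)$, so membership in $L$ strong-reduces to the word problem over $S_5$ (accept iff the output is $\ne e$, or, after appending $\sigma^{-1}$, iff the output equals $e$).

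\textbf{Step 3 (from $S_5$ to $K$).} Since $M(K)$ is not solvable, some subgroup of $M(K)$ has a non-abelian simple group $T$ as a composition factor; write $T=H/N$ with $H\le M(K)$ and $N\lhd H$. The simulation of Step~2 goes through with $T$ in place of $S_5$: the only group-theoretic inputs used were that the conjugacy class of a suitable non-identity target element generates the group — which holds for any non-identity element of the simple group $T$ — and that this target element is realizable as a commutator, which follows from perfectness ($[T,T]=T$ forces a non-trivial commutator to exist); any bounded overhead this introduces is absorbed into the $4^d$ bound, so the length stays polynomial. Hence membership in $L$ strong-reduces to the word problem $W_T$ over $T$. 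Then $W_T\le_\text{strong}W_{M(K)}$ by lifting each letter $t_i$ to a fixed preimage $h_i\in H\subseteq M(K)$ (the product is trivial in $T$ iff it lies in $N\subseteq M(K)$), and $W_{M(K)}\le_\text{strong}K$ by substituting, for each $m\in M(K)$, a fixed word $w_m$ with $\eta(w_m)=m$ under the syntactic morphism $\eta\colon\Sigma^*\to M(K)$: membership of $w_{m_1}\cdots w_{m_k}$ in $K$ — possibly after fixed context words chosen to align the accepting set $\eta(K)$ with the desired subset of $M(K)$ — depends only on the product $m_1\cdots m_k$, since $K$ is a union of syntactic congruence classes. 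Composing the three strong reductions yields $L\le_\text{strong}K$.

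\textbf{Main obstacle.} The heart of the argument is Step~2: the recursion must be organized so the branching-program length stays polynomial (a naive recursion blows up exponentially, and the $4^d=\mathrm{poly}(n)$ bound is precisely Barrington's insight). Step~3 additionally requires the group-theoretic observation that an arbitrary non-abelian simple composition factor of $M(K)$ can substitute for $S_5$, together with the bookkeeping needed to transport the ``product lands in $N$'' acceptance condition into ``string lies in $K$''.
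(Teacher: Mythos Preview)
The paper does not prove this theorem; it is quoted verbatim from Straubing's book (cited as \citet{semigroups}) as background, with only a one-line remark afterward that the reductions involved are $\AC^0$. So there is no ``paper's own proof'' to compare against. Your proposal is essentially the standard Barrington argument and is broadly sound: Step~2 (the commutator recursion over $S_5$, length $4^d$) is correct as you wrote it, and the idea of Step~3 --- replace $S_5$ by a non-abelian simple section $T=H/N$ of $M(K)$, then pass from $T$ back to $K$ through the syntactic morphism --- is the right one.

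There is one genuine imprecision in Step~3 that you should tighten. Your chain is $L\le_\text{strong}W_T\le_\text{strong}W_{M(K)}\le_\text{strong}K$, but the middle step as you justify it does \emph{not} land in $W_{M(K)}$: lifting $t_i\mapsto h_i\in H$ turns ``$\prod t_i=e$ in $T$'' into ``$\prod h_i\in N$ in $M(K)$'', which is membership of the product in the \emph{subset} $N$, not the word problem ``product $=e_{M(K)}$''. Consequently, the last step must test ``product $\in N$'' using $K$. Your suggestion that a single pair of context words can ``align $\eta(K)$ with the desired subset'' is not true in general: for an arbitrary subset $S\subseteq M(K)$ there need not exist $u,v$ with $umv\in\eta(K)\Leftrightarrow m\in S$. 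What \emph{does} work --- and stays within $\le_\text{strong}$ since the paper defines it as $\AC^0$ reduction --- is to use a constant number of queries: for each pair $(n,s)\in N\times\sigma N$ pick separating contexts (which exist by definition of the syntactic monoid), and take a fixed Boolean combination of the resulting $|N|\cdot|\sigma N|=O(1)$ membership tests in $K$ to decide whether the product lies in $N$ or in $\sigma N$. With that fix (and replacing the mislabeled $W_{M(K)}$ by ``product in $N$''), your argument goes through.
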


The methods used in the reduction process are simpler than $\NC$; specifically, they involve employing $\AC$ or $\TC$ for the reduction.
The well-known connection between this theorem and the original word problem of $S_5$ is as follows: for $n \ge 5$, the symmetric group $S_n$ is unsolvable.

\subsubsection{Examples of $\NC$ complete Regular Language}
\label{example:nc-complete}

\begin{proposition}
If $L=\nccomplete$, then $L$ is $\NC$ complete.
\end{proposition}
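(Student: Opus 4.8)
The plan is to show that $L = \nccomplete$ is $\NC$ complete by combining two ingredients already assembled in the excerpt: (i) membership $L \in \NC$, and (ii) $\NC$-hardness via Barrington's theorem (Theorem~\ref{thm:barrington}), which reduces to exhibiting an unsolvable syntactic monoid $M(L)$. Since $L$ is visibly regular (it is given by a regular expression), membership in $\NC$ is immediate from the general fact that regular languages lie in $\NC$ — indeed in $\AC$-reducible-to-word-problem terms they are recognized by a DFA whose run can be evaluated by a balanced binary tree of transition-monoid multiplications of depth $O(\log n)$ with constant fan-in. So the real content is step (ii).

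First I would construct the minimal DFA for $L$. The regular expression $((0+1)^3(01^*0+1))^*$ describes strings that decompose into blocks, each block being three arbitrary bits followed by a "gadget" that is either a single $1$ or a $0$, some run of $1$'s, and a closing $0$. I would track, via the standard subset/Myhill–Nerode construction, the finite state needed: a position counter within the current block (mod the block structure), plus a flag recording whether we are inside an $01^*0$ gadget. This yields a small DFA; its transition monoid $S$ is then computed as the monoid generated by the two transformations $\delta(\cdot,0)$ and $\delta(\cdot,1)$ on the state set, per the procedure cited from \citet{syntactic}. By the quoted fact, the syntactic monoid $M(L)$ equals this transition monoid $S$ (up to the passage to the minimal DFA).

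Second, and this is the heart of the argument, I would show $M(L)$ is \emph{not solvable} — i.e., it contains a non-solvable group, which for finite monoids means it contains (a divisor isomorphic to) a non-abelian simple group, the smallest being $A_5 \le S_5$. The design of the gadget $01^*0$ versus $1$ is precisely what lets the input alphabet $\{0,1\}$ simulate arbitrary group generators: within one block, the three free bits $(0+1)^3$ plus the choice of gadget give enough "letters" to realize each of the generators of $S_5$ as the action of some fixed word on a suitable copy of the $5$-element coordinate inside the state space, and the Kleene star makes these words composable. Concretely, I would embed $\{1,\dots,5\}$ into the DFA states and check that the block-words implement a generating set of $S_5$ (or $A_5$) as permutations of this embedded set, so $S_5$ (or $A_5$) is a subquotient of $M(L)$. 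Then $M(L)$ is unsolvable, Theorem~\ref{thm:barrington} applies with $K = L$, giving $L' \le_{\text{strong}} L$ for all $L' \in \NC$, and together with $L \in \NC$ this is exactly $\NC$ completeness (under $\AC$ reductions).

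The main obstacle is the bookkeeping in the second step: verifying that the particular gadget in $\nccomplete$ really does generate a non-solvable group inside the transition monoid, rather than just a solvable one (a nilpotent or solvable transition monoid would place $L$ in $\AC$ and the claim would fail). This requires either an explicit, if tedious, computation of $S$ and identification of an $A_5$ (or $S_5$) subquotient, or an appeal to a known classification showing that regular expressions of this shape have exactly this syntactic monoid — I would expect the paper to cite a standard reference (e.g.\ the $S_5$-simulating languages in the literature on $\NC$ via Barrington) rather than redo the monoid computation from scratch. Everything else — membership in $\NC$, the invocation of Barrington — is routine given the machinery recalled in Appendix~\ref{sec:circuit} and the preceding subsection.
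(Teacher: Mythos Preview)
Your high-level strategy is correct and is exactly the paper's: show $M(L)$ is unsolvable and invoke Theorem~\ref{thm:barrington}. Where you diverge is in anticipating a tedious monoid computation carried out via ``block-words'' acting on an embedded copy of $\{1,\dots,5\}$. The paper's actual argument is far shorter, because the minimal DFA for $L$ has \emph{exactly five states} $q_0,\dots,q_4$, and the two single-letter transitions are already \emph{permutations} of this state set: $f_0=(0\ 1\ 2\ 3\ 4)$ is a $5$-cycle and $f_1=(0\ 1\ 2\ 3)$ is a $4$-cycle fixing $q_4$. A $5$-cycle together with a $4$-cycle generates $S_5$, so the transition monoid \emph{is} $S_5$ outright, unsolvable, done. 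There is no embedding step, no block-level analysis, no external reference needed --- the regular expression was engineered precisely so that the letters $0$ and $1$ themselves realize these two generators. Your plan would succeed, but it misses the structural observation that makes the proof three lines rather than a page.
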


\begin{figure}[htbp]
    \centering
    \begin{tikzpicture}[shorten >=1pt,node distance=2cm,on grid,auto]
  \tikzstyle{every state}=[fill={rgb:black,1;white,10}]

    \node[state,initial,accepting]   (q_0)                    {$q_0$};
    \node[state] (q_1)  [right of=q_0]    {$q_1$};
    \node[state] (q_2)  [right of=q_1]    {$q_2$};
    \node[state] (q_3)  [right of=q_2]    {$q_3$};
    \node[state] (q_4)  [right of=q_3]    {$q_4$};

    \path[->]
    (q_0) edge node {0, 1} (q_1)
    (q_1) edge node {0, 1} (q_2)
    (q_2) edge node {0, 1} (q_3)
    (q_3) edge node {0} (q_4)
          edge [bend left] node {1} (q_0)
    (q_4) edge [loop below] node {1} ( )
          edge [bend right] node {0} (q_0);
\end{tikzpicture}
    \caption{The minimal DFA of $\nccomplete$}
    \label{fig:dfa-nc1-complete}
\end{figure}

\begin{proof}
Let $f_w:Q\to Q$ represents an element in the transition group $L$, where $f_w(q)$ denotes reaching the node $f_w(q)$ after inputting the string $w$ at node $q$. As illustrated in Figure \ref{fig:dfa-nc1-complete}, the transition group contains the following elements: \[
    f_0=\mqty(0&1&2&3&4\\1&2&3&4&0),\ f_1=\mqty(0&1&2&3&4\\1&2&3&0&4).
\]
Then $f_1^{-1}=f_1^3=f_{111}$ and $f_0^{-1}=f_0^4=f_{0000}$. Note that $\qty(0\ 1\ 2\ 3\ 4)$ and $\qty(0\ 1\ 2\ 3)$ are the generators of $S_5$ so $M(L)=S_5$ is not solvable. According to Theorem~\ref{thm:barrington}, $L$ is $\NC$ complete.
\end{proof}

\section{Theoretical Results}
\subsection{Proof of Proposition \ref{prop:r-to-pomdp}}
\label{subsec:proof-r-to-pomdp}
\begin{proof}[Proof of Proposition \ref{prop:r-to-pomdp}]
This proof is based on construction. Given a regular language $L\subseteq\Sigma^*$. We insert an end symbol $\#\notin\Sigma$ to obtain a new regular language $L^\#=\qty(Q,\Sigma\cup\qty{\#},\delta,F,q_0)$ s.t. $w\in L$ iff. $w\#\in L^\#$. Construct a POMDP $\mathcal M=(S,A,T,R,\Omega,O,\gamma)$. The state space $S$ is $Q\times \left(\Sigma\cup\{\#\}\right)$. The action space $A$ is $\qty{\text{\textit{accept}},\text{\textit{reject}}}$ and the observation space $\Omega$ is the alphabet $\Sigma\cup\qty{\#}$. The initial state is $(q_0,w_0)$, where $w_0$ is randomly sampled from $\Sigma\cup\{\#\}$. Given a state $(q_t,w_t)$ at timestep $t$, the agent could observe the character $w_t$. If $w_t\neq\#$, the next state would be $(q_{t+1},w_{t+1})$ where $q_{t+1}=\delta(q_t,w_t)$ and $w_{t+1}$ is randomly sampled from $\Sigma\cup\{\#\}$, and the agent would receive no reward; If $w_{t}=\#$, the process would terminate, and the agent would receive a reward of $1$ if it correctly outputs the acceptance of $w=w_0\ldots w_{t-1}$ in $L$. Note that the optimal policy $\pi$ on $\mathcal M$ is to \textit{accept} all $w\in L$ and \textit{reject} all $w\notin L$, so if an algorithm $\mathcal A$ can solve POMDP problems, then $\mathcal A$ can recognize $L$. 
\end{proof}

\subsection{Proof of Theorem \ref{thm:lim-in-tf-fit}}
\label{subsec:proof-lim-in-tf-fit}
\begin{lemma}[Theorem 2 in \cite{parallelism}]
\label{lemma:tf-upperbound}
Given an integer $d$ and polynomial $Q$, any log-precision transformer with depth $d$ and hidden size $Q(n)$ operating on inputs in $\Sigma^n$ can be simulated by a logspace-uniform threshold circuit family of depth $3+(9+2d_\oplus)d$.
\end{lemma}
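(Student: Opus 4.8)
\textbf{Proof proposal for Lemma~\ref{lemma:tf-upperbound}.}

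The plan is to argue that each of the three bulleted computations inside a log-precision transformer layer — the dot-product softmax attention block, the pointwise feed-forward (MLP) block, and the residual/normalization bookkeeping — can be carried out by a constant-depth threshold circuit whose size is polynomial in $n$, and then to stack $d$ such layer-gadgets, adding up the depths. First I would fix the model of computation: every scalar in the network is an $O(\log n)$-bit number, so a vector of hidden dimension $Q(n)$ is an $O(Q(n)\log n)$-bit object, which is polynomial in $n$; all the arithmetic that follows operates on such poly-bit objects. The key enabling fact, which I would cite from the literature on $\TC$ arithmetic (iterated addition, iterated multiplication, division, and approximating $e^x$ and reciprocals to $O(\log n)$ bits are all in $\TC$, indeed in uniform $\TC$ of a fixed constant depth), is that each primitive used in a transformer layer is a constant-depth threshold-circuit operation. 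This is what lets the per-layer depth be a constant, call it $d_\oplus$ for the cost of one iterated-sum/product gadget, and collect everything as $3 + (9 + 2 d_\oplus) d$.

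The main steps, in order, would be: (i) show the attention logits $q_i^\top k_j$ (a single inner product of two $O(Q(n)\log n)$-bit vectors) are computable in constant threshold depth — one layer of multiplications feeding an iterated-addition gadget; (ii) show the softmax over the $n$ logits in a row is computable in constant depth — exponentiation to $O(\log n)$ bits, an iterated addition for the normalizer, and a division, each constant depth; (iii) show the attention output $\sum_j \alpha_{ij} v_j$ is again a product-then-iterated-sum, constant depth; (iv) show the FFN block — two affine maps with a pointwise nonlinearity (ReLU, or whatever fixed activation, truncated to $O(\log n)$ bits) — is a bounded number of matrix-vector products plus a trivially-$\TC$ pointwise step; (v) account for the residual adds and the layer-norm (mean, variance, reciprocal square root, all $\TC$ to the required precision); (vi) compose: one layer costs $9 + 2 d_\oplus$ depth with the factor $2 d_\oplus$ coming from the two nested iterated-operation calls (multiply-then-add) appearing in attention and again in the FFN, and the additive $9$ absorbing the affine maps, softmax normalization, activation, residual, and normalization; then $d$ layers give $(9 + 2 d_\oplus) d$, and a final additive $3$ handles the input embedding / positional encoding at the bottom and the output readout at the top. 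Throughout I would be careful that the circuit family is logspace-uniform: the transformer's parameters are fixed constants (not part of the input), the wiring pattern of each gadget is a fixed, easily-describable pattern replicated $n$ (or $n^2$) times, and the uniformity of the underlying $\TC$ arithmetic gadgets is standard, so uniformity propagates through the composition.

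The main obstacle — and the step I would spend the most care on — is the precision bookkeeping in the softmax and layer-norm: one must verify that computing $e^{x}$, the reciprocal, and the reciprocal square root only to $O(\log n)$ bits of precision (the precision the model itself carries) is genuinely a constant-depth $\TC$ operation, and that the errors introduced do not compound across $d$ layers in a way that changes the layer's output beyond its own $O(\log n)$-bit rounding. Since $d$ is a constant (fixed depth) and each layer already rounds to $O(\log n)$ bits, the accumulated error stays within the model's native precision, so this is a matter of stating the right approximate-arithmetic lemmas rather than a genuine difficulty; but it is where the argument is least "one-line." A secondary subtlety is making sure the count $9 + 2d_\oplus$ is honest about which operations truly need a fresh iterated-operation gadget versus which can be folded into an adjacent constant-depth block — this is a careful-accounting step, not a conceptual one, and the exact constant is not important for the downstream use (Theorem~\ref{thm:lim-in-tf-fit} only needs that the depth is $O(d)$ and the size is polynomial). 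Since the statement is quoted verbatim from \citet{parallelism}, I would ultimately defer the detailed circuit construction to that reference and present here only the reduction-level sketch above.
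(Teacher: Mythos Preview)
Your proposal is correct in spirit and matches the approach of the original source \citep{parallelism}, but note that the present paper does not actually prove this lemma at all: it is stated purely as a citation (Theorem~2 of \citet{parallelism}) and invoked as a black box in the proof of Theorem~\ref{thm:lim-in-tf-fit}. So the paper's ``proof'' is simply the reference, and your closing sentence --- deferring the detailed construction to \citet{parallelism} --- is already the entirety of what the paper does. The reduction-level sketch you wrote (attention, softmax, FFN, residual/layer-norm each realized as constant-depth uniform threshold gadgets, then stacked $d$ times) is a faithful summary of the original argument, and your identification of the softmax/normalization precision bookkeeping as the delicate step is accurate; but none of this appears in the present paper, so strictly speaking you have supplied more than the paper itself contains.
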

\begin{remark}
\label{remark:tf-upperbound}
The scope outlined by Lemma \ref{lemma:tf-upperbound} for Transformers is quite broad, as its description of FNNs allows for any log-precision function. Therefore, in the case of a log-precision $(\TF, \RL)$ algorithm, we can distribute the $\RL$ part across the last FNN layer of the original Transformer, treating the entire model as a single Transformer.
\end{remark}

\begin{proof}[Proof of Theorem \ref{thm:lim-in-tf-fit}]
Proof by contradiction. Suppose there exists an integer $d$ and polynomial $Q$ such that for any $n$, a log-precision $\mathcal A=(\TF,\RL)$ with depth $d$ and hidden size $Q(n)$ can solve $\ml{L}$, where $L$ is a $\NC$ complete regular language. 
Given $w\in\Sigma^*$, The algorithm $\mathcal A$ can determine the validity of $w\in L$ by checking whether the action output by $\mathcal A(w)$ is ``accept''. Consequently, $\mathcal A$ can solve an $\NC$ complete problem.

At the same time, as stated in Remark \ref{remark:tf-upperbound}, we can treat $(\TF,\RL)$ as a single Transformer. Based on Lemma \ref{lemma:tf-upperbound}, $\mathcal A$ can be interpreted as a logspace-uniform threshold circuit family of constant depth, indicating that $L\in\TC$. Since we assume $\TC\neq\NC$, the existence of such an algorithm $\mathcal A$ is not possible.
\end{proof} 

\subsection{Proof of Theorem \ref{thm:lim-in-tfrl-gen}}
\label{subsec:lim-in-tfrl-gen}
\begin{lemma}
\label{lemma:lim-in-tfrl-gen}
Given an integer $n$, a symbol $a\in\Sigma$, a regular language $L\subseteq\Sigma^*$, let $L[n]=\qty{x\in L:\abs{x}=n}$ and \[
    P_{n,a}=\qty{(xa,x'a)\in L[n+1]\times(\bar{L})[n+1]:d(x,x')=1},
\]
where $d(\cdot,\cdot)$ denotes the number of different symbols in $x$ and $x'$.
If $0<c(n,a)<\abs{\Sigma}^n$, then $\abs{P_{n,a}}>0$.
\end{lemma}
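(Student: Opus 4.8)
\textbf{Proof proposal for Lemma~\ref{lemma:lim-in-tfrl-gen}.}

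The plan is to exploit the fact that the set $\{xa : |x| = n, xa \in L\}$ is a nonempty proper subset of the $|\Sigma|^n$ strings of the form $xa$, and to connect strings of length $n+1$ via the single-symbol-flip relation. First I would form the graph $G$ on the vertex set $\Sigma^n$ (the length-$n$ prefixes) whose edges join $x$ and $x'$ exactly when $d(x,x')=1$. This is the Hamming graph $H(n,|\Sigma|)$, which is well known to be connected: from any string one reaches any other by changing one coordinate at a time. Partition the vertices into $V_1 = \{x : xa \in L\}$ and $V_0 = \{x : xa \notin L\} = \{x : xa \in \bar L\}$. The hypothesis $0 < c(n,a) < |\Sigma|^n$ says precisely that $|V_1| = c(n,a)$ is neither $0$ nor the full count, so both $V_1$ and $V_0$ are nonempty.

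Next I would invoke connectedness of $G$: since $V_1$ and $V_0$ are nonempty and partition the vertex set, there must exist an edge of $G$ crossing between them — otherwise $V_1$ would be a nonempty union of connected components strictly smaller than $G$, contradicting connectedness. Concretely, pick any $y \in V_1$ and any $z \in V_0$; a path in $G$ from $y$ to $z$ starts in $V_1$ and ends in $V_0$, so some consecutive pair $(x, x')$ on the path has $x \in V_1$, $x' \in V_0$, and $d(x,x')=1$ by construction of the edges. Then $xa \in L[n+1]$, $x'a \in (\bar L)[n+1]$, and $d(x,x') = 1$, so $(xa, x'a) \in P_{n,a}$, giving $|P_{n,a}| > 0$.

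The only mild subtlety — and the step I would be most careful about — is the connectedness claim for the Hamming graph when $|\Sigma| = 1$: in that degenerate case $\Sigma^n$ has a single vertex and no flips are possible, but then $|\Sigma|^n = 1$ and the hypothesis $0 < c(n,a) < 1$ is vacuous, so there is nothing to prove; for $|\Sigma| \ge 2$ the graph is genuinely connected. I would state this as a one-line observation rather than belaboring it. Everything else is bookkeeping: translating "nonempty proper subset under a connected adjacency relation" into "there is a crossing edge," which is the standard fact that a connected graph has no nontrivial clopen partition. No estimates or asymptotics enter here; this lemma is purely combinatorial and feeds the quantitative Lipschitz argument (via Lemma~\ref{lemma:lim-in-tf-gen}) in the main proof of Theorem~\ref{thm:lim-in-tfrl-gen}.
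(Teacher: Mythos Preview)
Your proposal is correct and is essentially the same argument as the paper's: both use connectedness of the Hamming graph $H(n,|\Sigma|)$ to find an edge crossing the partition $\{x:xa\in L\}\cup\{x:xa\notin L\}$, the paper just phrases it contrapositively (if $|P_{n,a}|=0$ then $V_1$ is closed under single flips, hence by iteration equals $\emptyset$ or $\Sigma^n$). Your explicit handling of the degenerate case $|\Sigma|=1$ is a nice touch the paper omits.
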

\begin{proof}
Suppose $\abs{P_{n,a}}=0$. Note that if $xa\in L[n+1]$, then $\qty{x'a:d(x,x')=1}\subseteq L[n+1]$. Repeating this deduction, we can cover all $x\in\Sigma^n$. Hence, $\Sigma^na\subseteq L[n+1]$, that is, $c(n,a)=\abs{\Sigma}^n$. If $xa\notin L[n+1]$, then $xa\in (\bar{L})[n+1]$. As stated above, $c(n,a)=0$. By contradiction, $P_{n,a}>0$.
\end{proof}
\begin{proof}[Proof of Theorem \ref{thm:lim-in-tfrl-gen}]
Since $\Sigma$ is finite, $D=\max_{a,b\in\Sigma}\norm{u(a)-u(b)}$ where $u$ denotes the embedding vector of the given symbol. By Lemma~\ref{lemma:lim-in-tf-gen}, there exists $n$ such that $\qty{n:\abs{P_{n,a}}>0}$ is infinte.

Therefore, there exists infinite sequences $u$ and $u'$ such that $u$ and $u'$ differ by only $1$ position, yet $u\in L$ while $u'\notin L$. According to Lemma \ref{lemma:lim-in-tf-gen}, \[
    \norm{x-x'}\le\norm{u-u'}=O(D/n)=O(1/n)\;.
\] Since $\RL$ is a Lipschitz function, there exists a constant $C$ such that \[
    \norm{\RL(x)-\RL(x')}\le C\norm{x-x'}=O(1/n)\;.
\]
As $n$ increases, information from non-current time steps will only have a negligible impact on the output of $\RL$. 
\end{proof}






\section{Discussion on Existing POMDP Problems}
\label{sec:discuss-pomdp}
\cam{Using existing POMDP problems as examples, here demonstrates the derivation of POMDPs cast into regular languages.}

\textbf{Passive T-Maze~\cite{tfshine}.} The movement strategy towards the corridor's endpoint is akin to recognizing the regular language $0(01)^*$ with a DFA. If this DFA accepts the string formed by all current histories, then the agent moves upwards; otherwise, it moves downwards.

\textbf{Passive Visual-Match~\cite{hung2019optimizing}.} The complete state space of this environment is large, including player coordinates, coordinates of all fruits, whether fruits are collected, and other information. For convenience, we decouple this POMDP. Consistent with the analysis of Ni et al. (2023), this environment is divided into immediate greedy policies and long-term memory policies. For long-term memory policies, the environment not only needs to recognize the regular language $0(01)^*$ as in Passive T-Maze; but due to the existence of greedy policies, the player needs a strategy to move from any position in the room to the endpoint. The states required for this strategy only involve player coordinates, and judging the current coordinates based on historical information only requires a simple regular language. Considering directly treating the grid of the current room as the states of the DFA, and treating the actions $\{L, R, U, D\}$ as characters, if the player wants to determine the current position $(x, y)$, it is equivalent to recognizing the regular language that accepted by a DFA whose terminal state is $(x,y)$.

\section{Experimental Details}
\label{sec:impl}
\subsection{Task descriptions}
\subsubsection{Pybullet tasks}
\textbf{Ant}. This task is to simulate a hexapod robot resembling an ant. The objective is to develop a control policy that enables the ant to leverage the six legs for specific movements.

\textbf{Walker}. This task is to simulate a bipedal humanoid robot. The goal is to design a control strategy that facilitates stable and efficient walking, mimicking human-like locomotion patterns.

\textbf{HalfCheetah}. This task is to simulate a quadruped robot inspired by the cheetah's anatomy. The aim is to devise a control policy that allows the robot to achieve rapid and agile locomotion.

\textbf{Hopper}. This task is to simulate a single-legged robot, and the objective is to develop a control strategy for jumping locomotion to achieve efficient forward movement.

Task (F) stands for the original task with full observation, while Task (V) and Task (P) stand for that only velocities or positions are observable, respectively. In Figure~\ref{Visualizetasks}, we provide the visualization of each task. 

\begin{figure}[htbp]
    \centering
    \includegraphics[height=0.2\textwidth,width=0.2\textwidth]{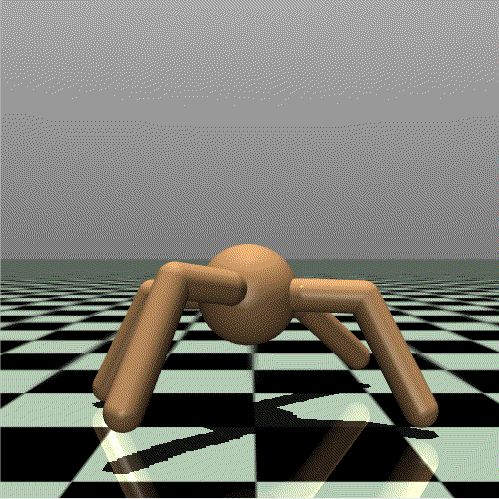}
    \includegraphics[height=0.2\textwidth,width=0.2\textwidth]{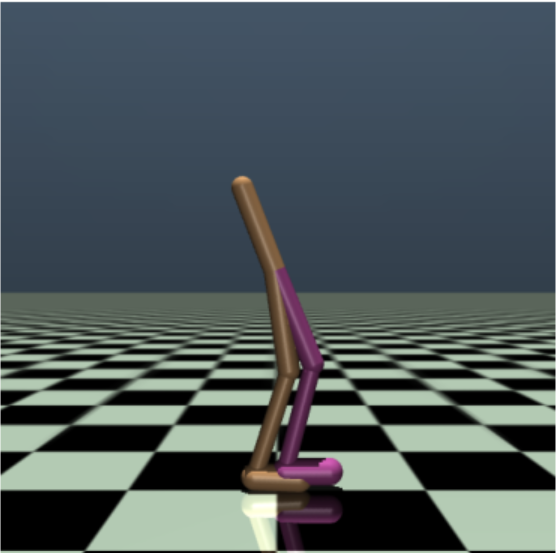}
    \includegraphics[height=0.2\textwidth,width=0.2\textwidth]{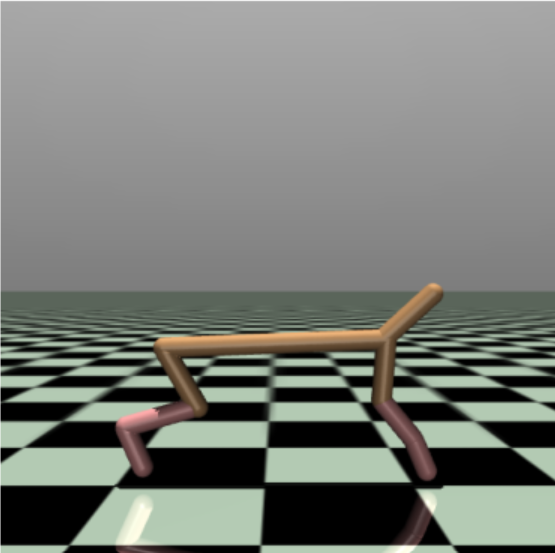}
    \includegraphics[height=0.2\textwidth,width=0.2\textwidth]{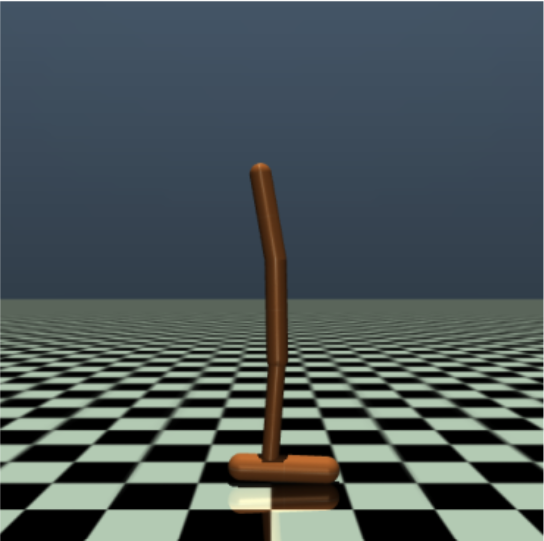}
    \caption{\textbf{Visualizations of Tasks in Pybullet}. From left to right are Ant, Walker, HalfCheetah and Hopper.}
    \label{Visualizetasks}    
\end{figure}
\subsubsection{POMDPs determined by regular languages}
\parity. Given a $01$ sequence, compute whether the number of $1$ is even. The formal expression is $0^*(10^*1)^*0^*$.

\texttt{EVEN PAIRS}. Given a $01$ sequence, compute whether its first and last bit are the same. The formal expression is written as $(0[01]^*0)|(1[01]^*1)$.

\texttt{\sfive}. Given a $01$ sequence, compute whether it belongs to a case of $\NC$ complete regular languages, namely \texttt{S$_5$}, with the formal expression $\nccomplete$.

\subsubsection{Pure long-term memory tasks}
\textbf{Passive T-Maze} (\citet{tfshine}). The environment is a long corridor of length $L$ from the initial state $O$ to $J$ and $J$ is connected with two terminal states $G_1$ and $G_2$. The horizontal positions of $O$ and $J$ are $0$ and $L$, respectively. No information is observable except at states $J,O,G_1,G_2$. At $J,O,G_1,G_2$, the agent could observe its current position $x_t$, and at $O$, the agent could observe a signal $G$, which is uniformly sampled in $\{G_1,G_2\}$. The available actions are to move in $4$ directions: left, right, up and down. The transitions are deterministic, and the agent would not move on hitting the wall. And the goal is to maximize the rewards, which is given by $\sum_{t=0}^L\frac{1(x_t\ge t)-1}{L}+1(o_{L+1}=G)$.

\textbf{Passive Visual Match} (\citet{longtimememory}). The environment is a $7\times 11$ grid-world consisting of $3$ phases while only a $5\times 5$ grid surrounding the agent is observable. In each episode of the task, a randomly assigned color is shown to the agent in the first phase; in the second phase, the agent could pick up apples with immediate rewards; in the final phase, the agent should pick up the assigned color shown among three randomly colored squares.

We show the visualization of each task taken from \citet{tfshine} in Figure~\ref{Visualizelongtasks}. 

\begin{figure}[htbp]
    \centering
    \includegraphics[height=0.13\textwidth]{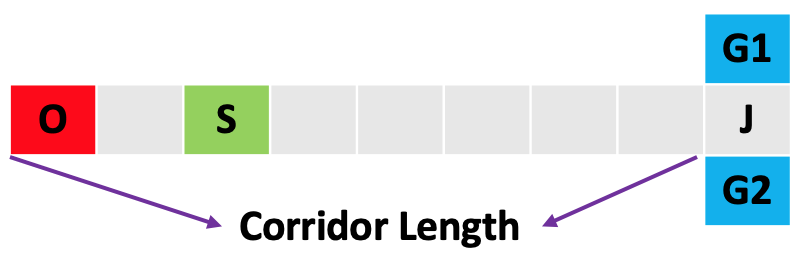}
    \includegraphics[height=0.13\textwidth]{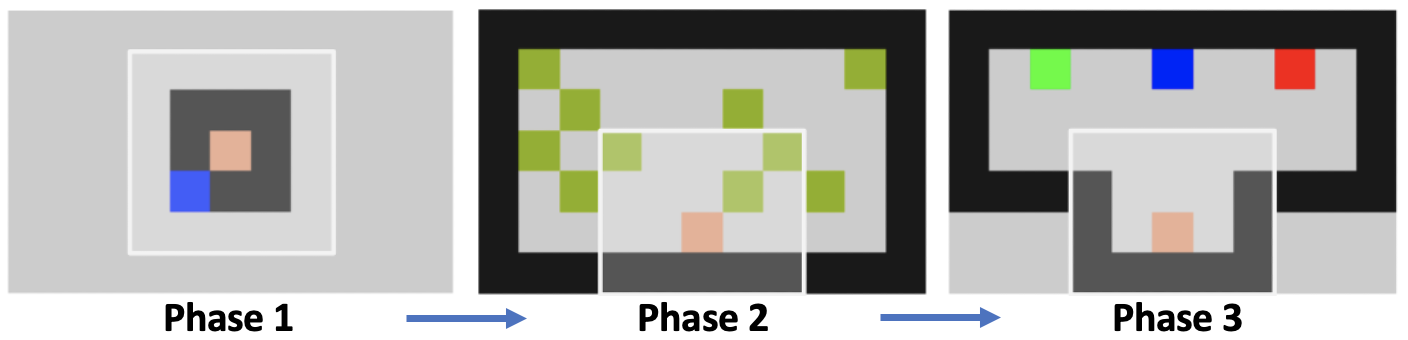}
    
    \caption{\textbf{Visualizations of pure long-term memory Tasks}. The left figure illustrates Passive T-Maze, and the right one indicates Passive Visual Match. Both of them are directly extracted  from \citet{tfshine}.}
    \label{Visualizelongtasks}    
\end{figure}
\subsection{Codebase}
Our code is mainly based on \citet{recurrentbaseline} (\url{https://github.com/twni2016/pomdp-baselines}) and \citet{tfshine} (\url{https://github.com/twni2016/Memory-RL}). The implementation of LRU follows \citet{linearrnn} (\url{https://github.com/yueqirex/LRURec};\url{https://github.com/Gothos/LRU-pytorch}). \cam{
Our official code is released at \url{https://github.com/CTP314/TFPORL}.}

\subsection{Network architecture}
Our network architecture is borrowed from \citet{tfshine}, and for completeness we show the extracted illustrations in Figure~\ref{fig:seq-rl-arch}.
\begin{figure}[htbp]
    \centering
    \includegraphics[height=0.2\textwidth]{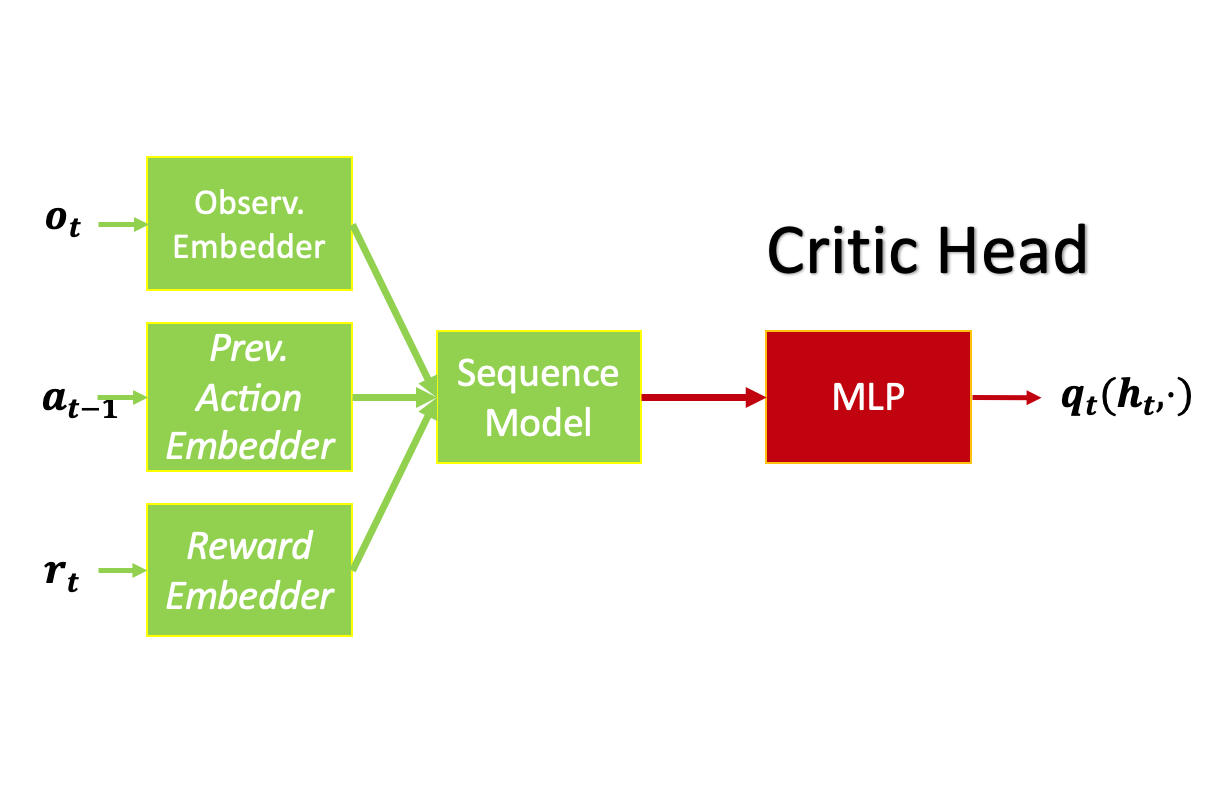}
    \hspace{0.1in}
    \includegraphics[height=0.2\textwidth]{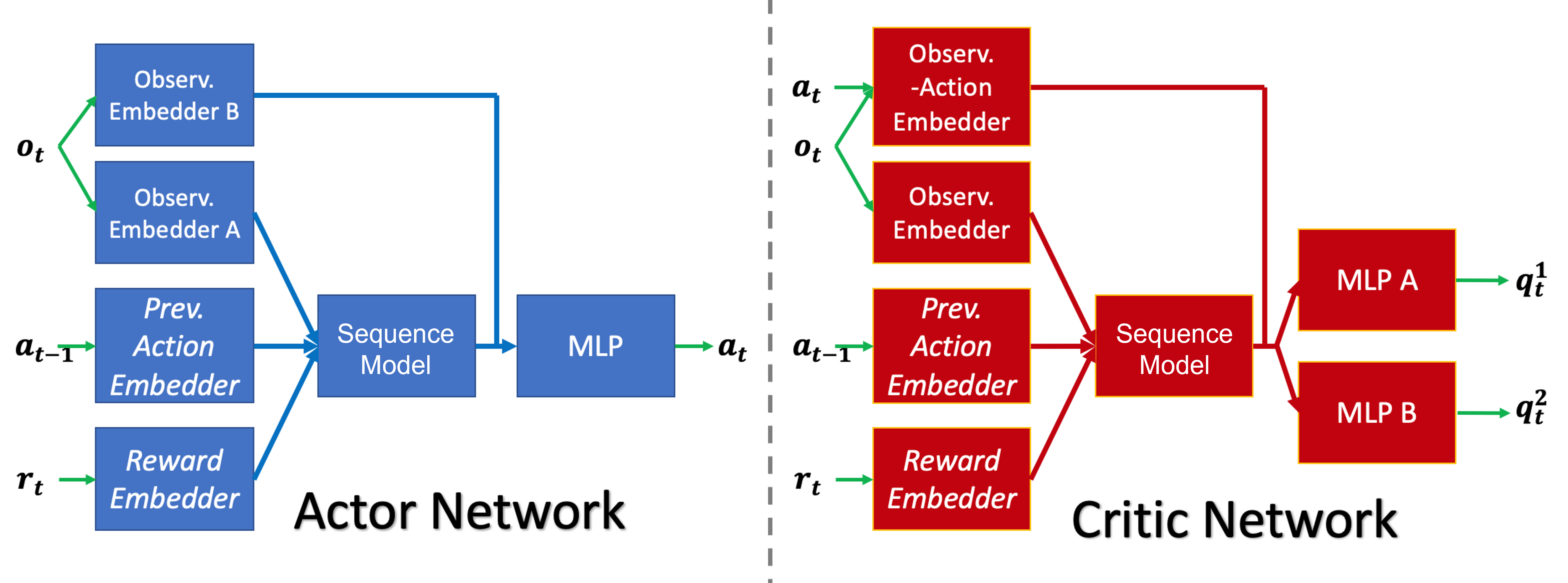}
    \caption{\textbf{The network architecture of $(\SEQ,\RL)$}. The left figure illustrates $(\SEQ,\texttt{DQN})$ and $(\SEQ,\texttt{SACD})$, and the right one indicates $(\SEQ,\texttt{TD3})$. Both of them are directly extracted from \citet{recurrentbaseline,tfshine}.}
    \label{fig:seq-rl-arch}    
\end{figure}
\vspace{-0.1in}
\subsection{Hyperparameters}
We provide the configuration of hyperparameters in Table~\ref{tab:hyperparameters} and Table~\ref{tab:hyperparameters-rl}. Notably, to ensure strong alignment with the existing literature, we select RL algorithms identical to those in \citet{recurrentbaseline, tfshine}, namely \texttt{TD3}~\cite{td3} for PyBullet occlusion tasks, \texttt{DQN}~\cite{ddqn} for Passive T-Maze and \texttt{SACD}~\cite{SAC} for Passive Visual Match. For POMDP derived from Regular Language, due to its similarity with Passive T-Maze (equivalent to recognizing $01$ sequences starting with $0$), we employ \texttt{DQN} as well.

\begin{table*}[htbp]
\centering
\caption{\textbf{Hyperparameters of different POMDP tasks}.}
\label{tab:hyperparameters}
\vspace{0.1in}
\begin{adjustbox}{width=0.8\textwidth}
\begin{tabular}{c|c|cccc}
\toprule
   & Hyperparameter &  Regular Language & PyBullet Occlusion & Passive T-Maze  & Passive Visual Match \\ \midrule
 \multirow{9}{*}{Network} &  LSTM embedding [$h_o,h_a,h_r$]   & 32, 0, 0 & 32, 16, 16 & 32, 16, 0 & 2-layer CNN\\ 
 & LSTM [$n_\text{layer}, h$] & 128, 1 & 128, 1 & 128, 1 &16 \\
  & LRU embedding [$h_o,h_a,h_r$]  & 64, 0, 0 & 32, 16, 16 & 64, 64, 0 & 2-layer CNN \\
  & LRU [$h, n_\text{layer}$] & 64, 2 & 64, 2 & 128, 1 & 128, 1\\
 & GPT embedding [$h_o,h_a,h_r$]  & 64, 0, 0 & 32, 16, 16 &  64, 0, 0 & 2-layer CNN \\
 & GPT [$h, n_\text{heads}, n_\text{layer}$] & 64, 2, 2 & 64, 2, 2 & 128, 1, 1 & 128, 1, 1\\
 & Length & $l_\text{trajectory}$  & 64 & $l_\text{trajectory}$ & $l_\text{trajectory}$ \\
 & RL & \texttt{DQN} & \texttt{TD3} & \texttt{DQN} & \texttt{SACD}  \\
 & MLP & not used  & [256, 256] & [256, 256] & [256, 256] \\ \midrule
\multirow{5}{*}{Train \& Eval} & Environment steps & 0.4M,1M,2M & 1.5M & 1M, 2M, 4M, 4M & 2.4M, 3.75M, 5M  \\ 
& Gradient steps & 40k & 1.5M & 20k, 20k, 16k, 8k & 40k, 30k, 32k \\ 
& Batch size & 64 & 64 & 64 & 64 \\ 
& Evaluation interval  & 100 & 4k & 10 & 50 \\ 
& Evaluation episodes  & 100 & 10 & 10 & 20 \\ 
\bottomrule
\end{tabular}
\end{adjustbox}
\end{table*}

\begin{table*}[htbp]
\centering
\caption{\textbf{Hyperparameters of different RL algorithms}.}
\label{tab:hyperparameters-rl}
\vspace{0.1in}
\begin{adjustbox}{width=0.5\textwidth}
\begin{tabular}{cccc}
        \toprule
        RL algorithm & \texttt{DQN} & \texttt{TD3} & 
        \texttt{SACD} \\
        \midrule
        Hidden dimension & (256, 256) & (256, 256) & (256, 256) \\
        Exploration noise & - & 0.1 & - \\
        Target noise & - & 0.2 & - \\
        Target noise clip & - & 0.5 & - \\
        Discount factor & 0.99 & 0.99 & 0.99\\
        Smoothing coefficient & 0.005 & 0.005 & 0.005\\
        Learning rate & 3e-4 & 3e-4 & 3e-4\\
        Replay buffer size & 1M & 1M & 1M \\
        \bottomrule
\end{tabular}

\end{adjustbox}
\end{table*}

\subsection{Supplementary results}
\label{appendix:supplementary}
In this subsection, we provide additional experimental results for supplementation.  

\textbf{PyBullet occlusion tasks}. In Table~\ref{tab:unormalized-pomdp-task}, we show the original return for each PyBullet occlusion task and sequence model, while the normalized score in Table~\ref{tab:pomdp-task} is calculated as 
\begin{equation}
    \text{Normalize score} = \frac{R-R_{\text{random}}}{R_{\text{expert}}-R_{\text{random}}}\times 100\,,
    \label{eq:normalized_score}
\end{equation}
\note{where $R$ denotes the return of a certain instance, $R_{\text{expert}}$ denotes the return from the expert policy, and $R_{\text{random}}$ denotes the return from a random policy (both provided by \citet{D4RL}).}  Scores used for normalization are provided in Table~\ref{table:normalized score}. The results elucidate that LRU and LSTM demonstrate superior performance over GPT across all eight tasks and LRU distinctly surpasses LSTM in specific tasks, particularly in Ant (P, V), and Walker (V).

\cam{The hyperparameters and code used in our experiments are derived from \citet{recurrentbaseline}, ensuring consistency and fairness with the results presented in Table 7, Figure 9, and Figure 16 of \citet{recurrentbaseline}. We additionally conducted experiments on the Ant environment that show significant deviations from \citet{tfshine} .The training curves in Figure \ref{fig:pomdp-learning-curve-jax} demonstrate that LRU still outperforms LSTM.} 

\begin{table}[htbp]
    \centering
    \caption{\textbf{Scores used for normalization.} Scores of each task are linearly normalized by the corresponding random score and expert score.}
    \vspace{0.1in}
    \begin{adjustbox}{width=0.7\textwidth}

\begin{tabular}{ccc|ccc}
    \toprule
    Task Name & Random Score & Expert Score & Task Name & Random Score & Expert Score \\
    \midrule
    Ant & 373.71 & 2650.50 & Cheetah & -1275.77 & 2381.67 \\
    Walker & 16.52 & 1623.65 & Hopper & 20.06 & 1441.81 \\
    \bottomrule
\end{tabular}

    \end{adjustbox}
    \label{table:normalized score}
\end{table}

\begin{table*}[htbp]
\centering
\caption{\textbf{Returns for PyBullet occlusion tasks}. We compare different sequence models LRU, GPT and LSTM. `V' refers to `only velocities observable', and `P' refers to `only positions observable'. \mhl{Blue} highlight indicates the highest score, and \mlhl{orange} highlight indicates the second-highest score.}
\label{tab:unormalized-pomdp-task}
\vspace{0.1in}
\begin{adjustbox}{width=0.6\textwidth}
\begin{tabular}{ccccc}
\toprule
Task & Type & LRU & GPT & LSTM \\ \midrule

\multirow{2}{*}{Ant} & V & \mhl{\po 1052.4}\textcolor{gray}{\mhl{ $\pm$ 465.5\po\po }} & \mlhl{\po\po 587.0}\textcolor{gray}{\mlhl{ $\pm$ 161.6\po\po }} & {\po\po 542.0}\textcolor{gray}{{ $\pm$ 200.0\po\po }} \\
 & P & \mhl{\po 2222.5}\textcolor{gray}{\mhl{ $\pm$ 652.8\po\po }} & \mlhl{\po 1242.5}\textcolor{gray}{\mlhl{ $\pm$ 591.8\po\po }} & {\po\po 504.1}\textcolor{gray}{{ $\pm$ 74.1\po\po\po }} \\
\multirow{2}{*}{Cheetah} & V & \mlhl{\po 2266.2}\textcolor{gray}{\mlhl{ $\pm$ 297.7\po\po }} & {\po 1258.9}\textcolor{gray}{{ $\pm$ 251.5\po\po }} & \mhl{\po 2313.6}\textcolor{gray}{\mhl{ $\pm$ 304.6\po\po }} \\
 & P & \mlhl{\po 2743.1}\textcolor{gray}{\mlhl{ $\pm$ 151.9\po\po }} & {\po 1972.1}\textcolor{gray}{{ $\pm$ 232.2\po\po }} & \mhl{\po 2840.0}\textcolor{gray}{\mhl{ $\pm$ 198.1\po\po }} \\
\multirow{2}{*}{Hopper} & V & \mhl{\po 1357.8}\textcolor{gray}{\mhl{ $\pm$ 332.5\po\po }} & {\po\po 211.4}\textcolor{gray}{{ $\pm$ 7.7\po\po\po\po }} & \mlhl{\po 1193.3}\textcolor{gray}{\mlhl{ $\pm$ 527.1\po\po }} \\
 & P & \mlhl{\po 2122.4}\textcolor{gray}{\mlhl{ $\pm$ 174.7\po\po }} & {\po\po 357.8}\textcolor{gray}{{ $\pm$ 256.4\po\po }} & \mhl{\po 2637.0}\textcolor{gray}{\mhl{ $\pm$ 190.7\po\po }} \\
\multirow{2}{*}{Walker} & V & \mhl{\po 1007.7}\textcolor{gray}{\mhl{ $\pm$ 234.3\po\po }} & \mlhl{\po\po 374.4}\textcolor{gray}{\mlhl{ $\pm$ 95.7\po\po\po }} & {\po\po 212.6}\textcolor{gray}{{ $\pm$ 112.4\po\po }} \\
 & P & \mlhl{\po 1290.3}\textcolor{gray}{\mlhl{ $\pm$ 371.1\po\po }} & {\po\po 812.2}\textcolor{gray}{{ $\pm$ 77.4\po\po\po }} & \mhl{\po 1536.8}\textcolor{gray}{\mhl{ $\pm$ 588.2\po\po }} \\
\bottomrule
\end{tabular}
\end{adjustbox}
\end{table*}

\begin{figure*}[htb]
\vskip 0.2in
\begin{center}
\includegraphics[width=0.3\columnwidth]{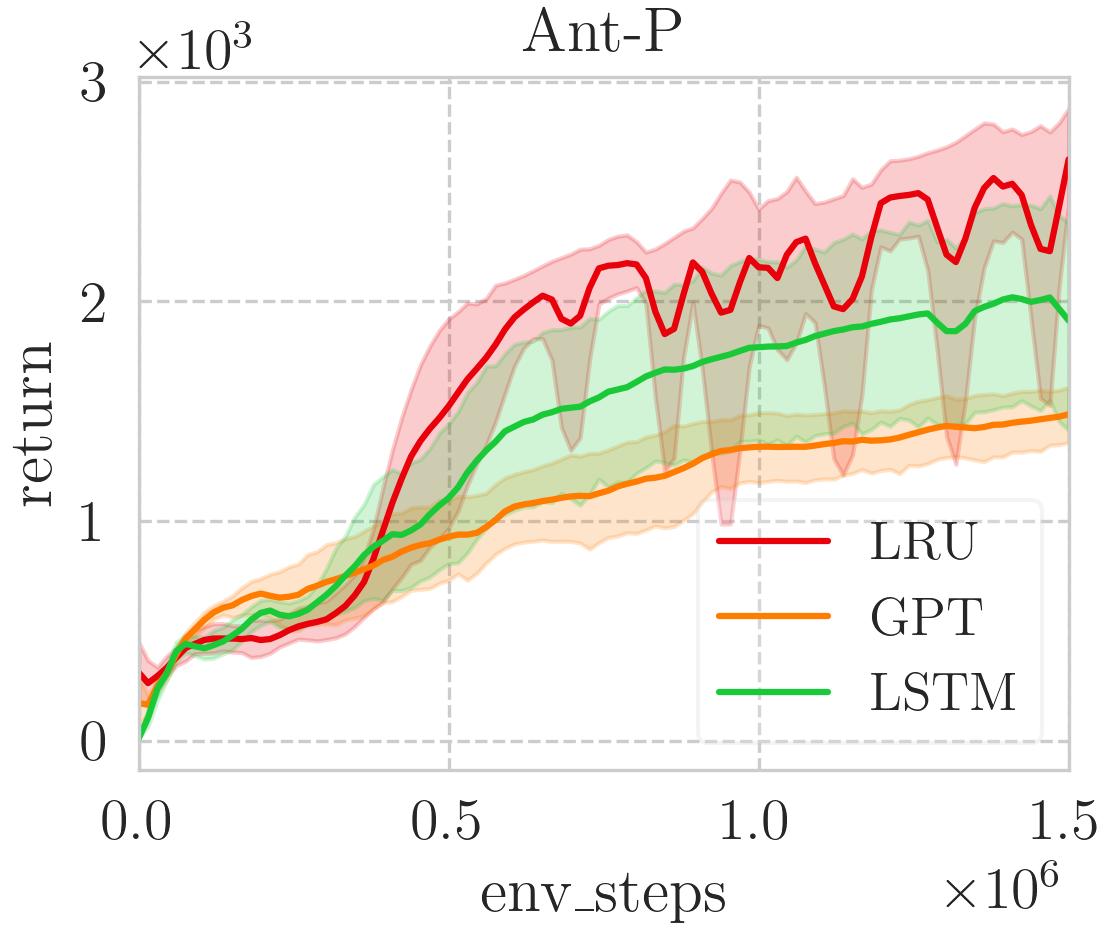}
\hspace{0.3in}
\includegraphics[width=0.3\columnwidth]{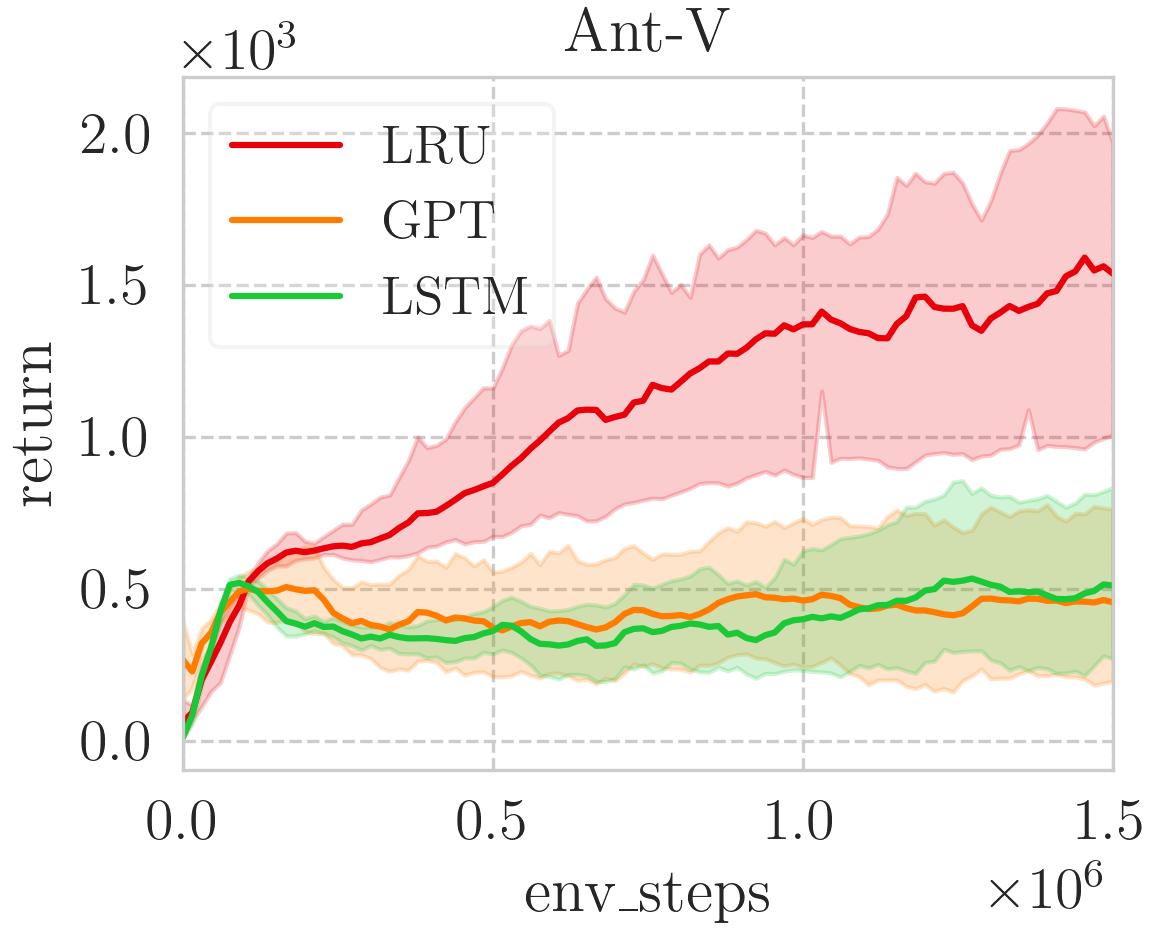}
\end{center}
\caption{\textbf{Learning curves for PyBullet occlusion tasks(Ant) in \citet{tfshine}}. The shaded area indicates $95$\% confidence intervals.}
\label{fig:pomdp-learning-curve-jax}
\end{figure*}

\textbf{PyBullet fully observable tasks}. To verify that the performance degradation of GPT compared to the other two models is due to partial observability of the environment, we carry out an ablation study on the partial observability, providing the normalized and original returns for fully observable PyBullet tasks in Table~\ref{tab:pomdp-task-fully-obs}, and learning curves in Figure~\ref{fig:pomdp-learning-curve-fully}. 

\begin{table*}[htbp]
\centering
\caption{\textbf{Returns for PyBullet fully observable tasks}. We compare different sequence models LRU, GPT, and LSTM. `F' refers to `fully observable'. The left table stands for the original score, and the right one indicates the normalized score. \mhl{Blue} highlight indicates the highest score, and \mlhl{orange} highlight indicates the second-highest score. The overall performance of the three models are close, affirming that GPT's inferior performance in POMDP scenarios stems from partial observability rather than other factors.}
\label{tab:pomdp-task-fully-obs}
\vspace{0.1in}
\begin{adjustbox}{width=0.45\textwidth,valign=t}
\begin{tabular}{ccccc}
\toprule
Task & Type & LRU & GPT & LSTM \\ \midrule

Ant & F & \mhl{\po 3218.2}\textcolor{gray}{\mhl{ $\pm$ 42.9\po\po\po }} & \mlhl{\po 2980.8}\textcolor{gray}{\mlhl{ $\pm$ 46.1\po\po\po }} & {\po 2799.2}\textcolor{gray}{{ $\pm$ 122.9\po\po }} \\
Cheetah & F & \mlhl{\po 3101.9}\textcolor{gray}{\mlhl{ $\pm$ 182.3\po\po }} & {\po 2959.6}\textcolor{gray}{{ $\pm$ 424.6\po\po }} & \mhl{\po 3150.4}\textcolor{gray}{\mhl{ $\pm$ 458.5\po\po }} \\
Hopper & F & \mlhl{\po 2613.6}\textcolor{gray}{\mlhl{ $\pm$ 73.2\po\po\po }} & {\po 2533.9}\textcolor{gray}{{ $\pm$ 2.1\po\po\po\po }} & \mhl{\po 2793.7}\textcolor{gray}{\mhl{ $\pm$ 18.9\po\po\po }} \\
Walker & F & {\po 2134.4}\textcolor{gray}{{ $\pm$ 154.9\po\po }} & \mlhl{\po 2152.8}\textcolor{gray}{\mlhl{ $\pm$ 239.4\po\po }} & \mhl{\po 2241.6}\textcolor{gray}{\mhl{ $\pm$ 109.5\po\po }} \\
\bottomrule
\end{tabular}
\end{adjustbox}
\hspace{0.1in}
\begin{adjustbox}{width=0.45\textwidth,valign=t}
\begin{tabular}{cccccc}
\toprule
    Task & Type & LRU & GPT & LSTM \\ Ant & F & \mhl{\po\po 124.9}\textcolor{gray}{\mhl{ $\pm$ 1.9\po\po\po\po }} & \mlhl{\po\po 114.5}\textcolor{gray}{\mlhl{ $\pm$ 2.0\po\po\po\po }} & {\po\po 106.5}\textcolor{gray}{{ $\pm$ 5.4\po\po\po\po }} \\
Cheetah & F & \mlhl{\po\po 119.7}\textcolor{gray}{\mlhl{ $\pm$ 5.0\po\po\po\po }} & {\po\po 115.8}\textcolor{gray}{{ $\pm$ 11.6\po\po\po }} & \mhl{\po\po 121.0}\textcolor{gray}{\mhl{ $\pm$ 12.5\po\po\po }} \\
Hopper & F & \mlhl{\po\po 182.4}\textcolor{gray}{\mlhl{ $\pm$ 5.1\po\po\po\po }} & {\po\po 176.8}\textcolor{gray}{{ $\pm$ 0.1\po\po\po\po }} & \mhl{\po\po 195.1}\textcolor{gray}{\mhl{ $\pm$ 1.3\po\po\po\po }} \\
Walker & F & {\po\po 131.8}\textcolor{gray}{{ $\pm$ 9.6\po\po\po\po }} & \mlhl{\po\po 132.9}\textcolor{gray}{\mlhl{ $\pm$ 14.9\po\po\po }} & \mhl{\po\po 138.5}\textcolor{gray}{\mhl{ $\pm$ 6.8\po\po\po\po }} \\
\midrule
\multicolumn{2}{c}{\textbf{Average}} & {\po\po139.7\po\po} & {\po\po135.0\po\po} & \mhl{\po\po140.3\po\po} \\
\bottomrule
\end{tabular}
\end{adjustbox}
\end{table*}

\begin{figure*}[htbp]
\vskip 0.2in
\begin{center}
\centerline{\includegraphics[width=\textwidth]{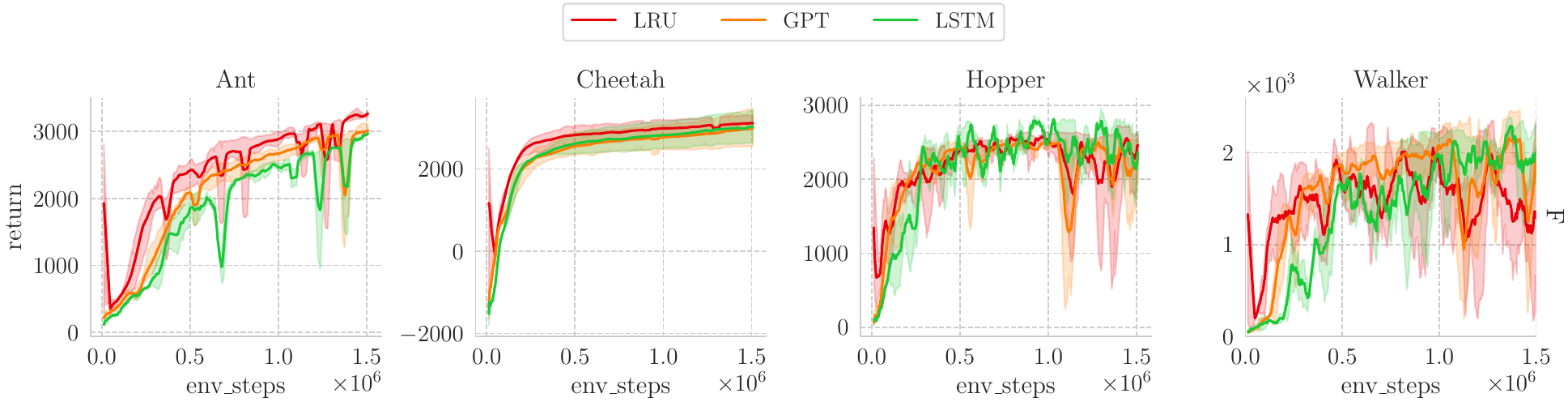}}
\caption{\textbf{Learning curves for PyBullet fully observable tasks}. The shaded area indicates $95$\% confidence intervals.}
\label{fig:pomdp-learning-curve-fully}
\end{center}
\vskip -0.2in
\end{figure*}

\begin{figure*}[htbp]
\vskip 0.2in
\begin{center}
\centerline{\includegraphics[width=\textwidth]{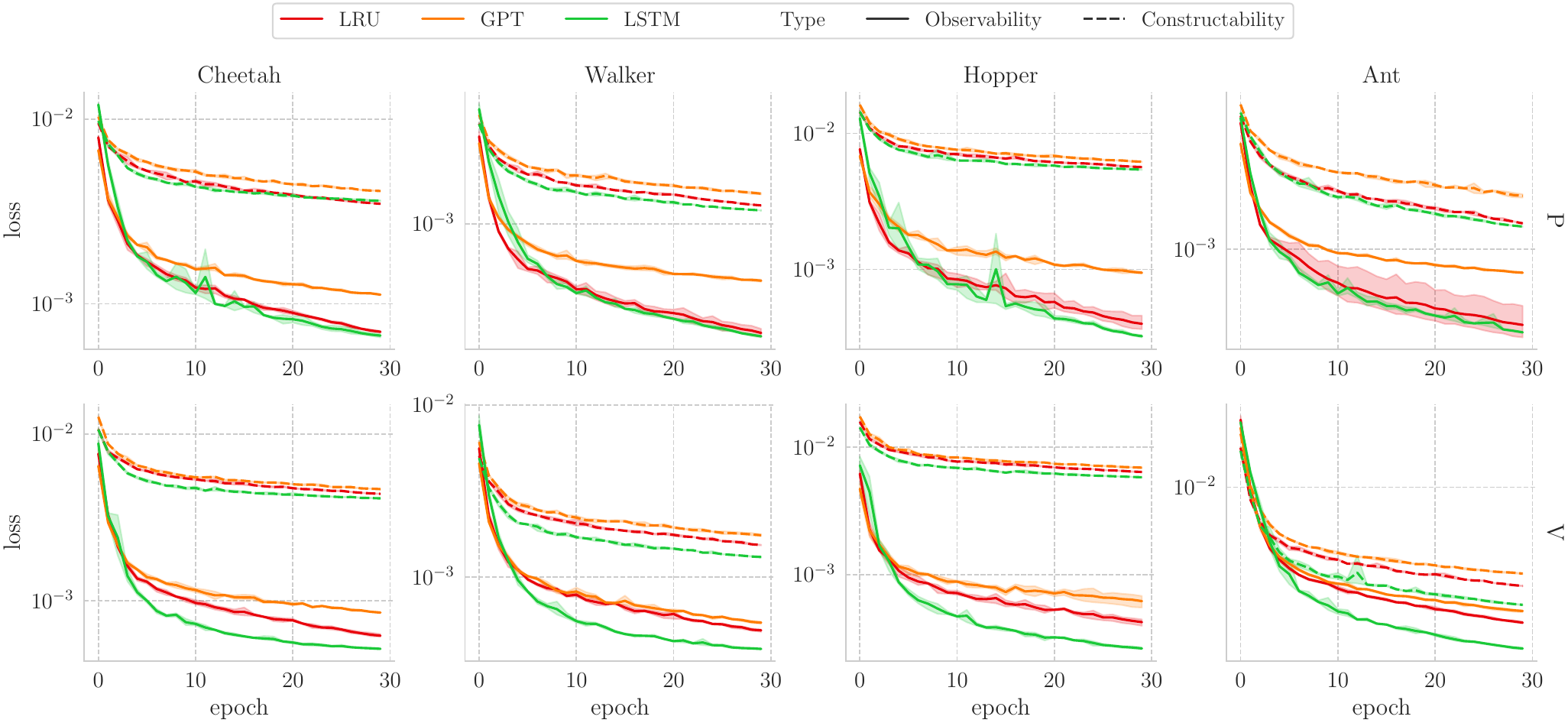}}
\caption{\textbf{Learning curves for \ssm tasks}. We present the curves of average MSE loss over $3$ seeds. The shaded area indicates $95$\% confidence intervals.}
\label{fig:pomdp-learning-curve-ssm}
\end{center}
\vskip -0.2in
\end{figure*}

\textbf{PyBullet \ssm tasks}. We provide the MSE loss curve throughout training for PyBullet \ssm tasks in Figure~\ref{fig:pomdp-learning-curve-ssm}, while in Figure~\ref{fig:ssm}, we present the average MSE ratios over the 8 tasks. The MSE ratio is computed as 
\begin{equation}
\text{MSE ratio} = {l}/{l_{\min}}\;,
\label{eq:mse_ratio}
\end{equation}
where $l$ denotes the final MSE loss of the training curve, and $l_{\min}$ denotes the minimal $l$ over the 3 models.


\textbf{Length Extrapolation in Regular Language Tasks}. \cam{Figure \ref{fig:length-extra} shows the results for regular language tasks with training lengths less than or equal to $n=25$, tested at $n/2$ and $n+i$ for $i\in\{1,2,3,4,8,16,32\}$. Due to RNN's completeness over regular languages, RNN exhibits the best generalization ability. On $\texttt{SYM(5)}$ and $\texttt{PARITY}$, LRU significantly outperforms GPT, which aligns with the discussion in Sec 5, mitigating the theoretical constraints of GPT. On $\texttt{EVEN PAIRS}$, which naturally conforms to attention's inductive bias, GPT also demonstrates better generalization ability than LRU, but it fails at longer lengths (all three languages fall within the scope of Theorem \ref{thm:lim-in-tfrl-gen}). This result is consistent with the visualization results in Figure 3, where LRU shows good recovery effects for hidden states on $\texttt{SYM(5)}$ and $\texttt{PARITY}$, while GPT's hidden states exhibit significant discrepancies from the true distribution.}

\begin{figure*}[htb]
\vskip 0.2in
\begin{center}
\centerline{\includegraphics[width=0.65\textwidth]{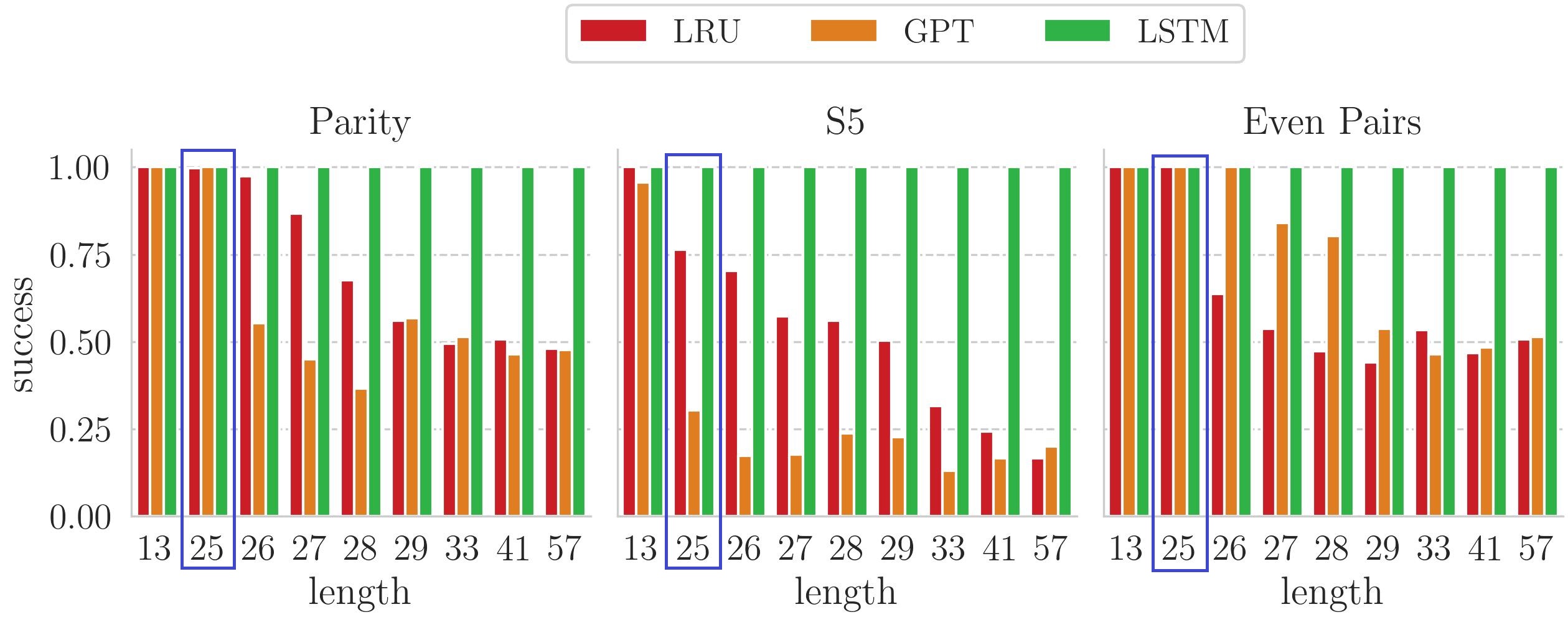}}
\caption{\textbf{Length Extrapolation in Regular Language Tasks}. Mean of 5 seeds. Training length with the longest length enclosed in a blue box.}
\label{fig:length-extra}
\end{center}
\vskip -0.2in
\end{figure*}

\textbf{GPT Scale in Regular Language Tasks}. \cam{Figure \ref{fig:gpt-scale} shows learning curves of different scales on the $\texttt{SYM(5)}$($n=25$) task with Transformer. The results show that scaling up GPT does not yield significant effects, consistent with the statement of Theorem~\ref{thm:lim-in-tfrl-gen}. Transformer is only equivalent to a computation circuit of width $O(\text{poly}(n))$ and depth $O(1)$. Changing the parameter count merely increases the width or changes the constant width, while not significantly enhancing the Transformer's ability to solve $\mathsf{NC^1}$ complete problem.}

\begin{figure*}[htb]
\vskip 0.2in
\begin{center}
\centerline{\includegraphics[width=0.8\textwidth]{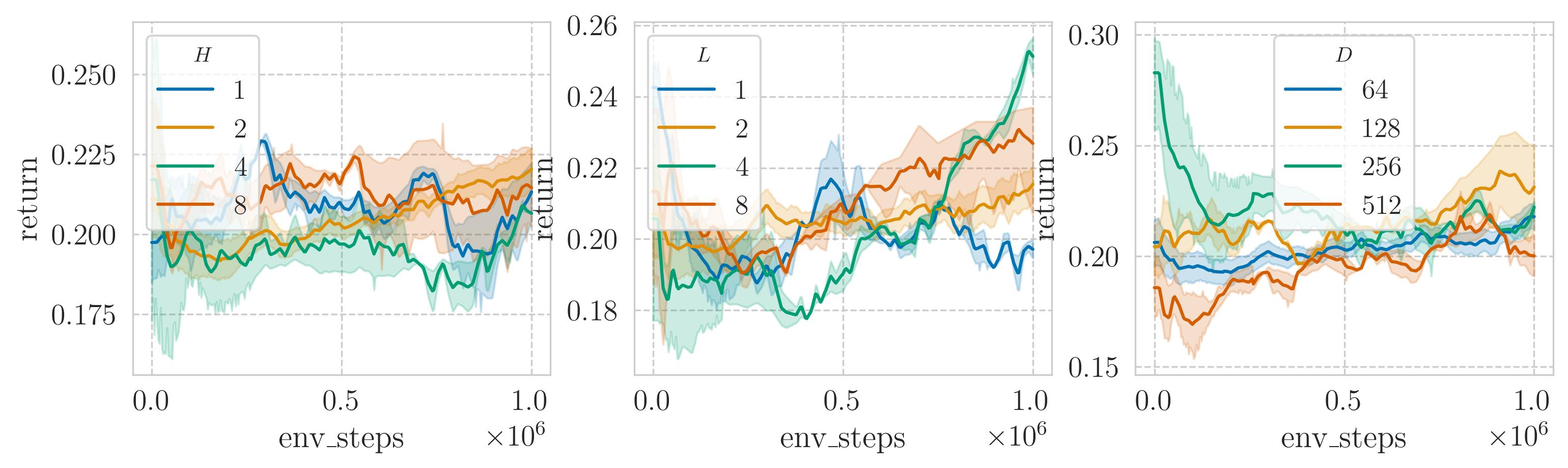}}
\caption{\textbf{GPT Scale in $\texttt{SYM(5)}$}. Mean of 5 seeds. $H,L,D$ denote hidden size, the number of layers and the depth, respectively.}
\label{fig:gpt-scale}
\end{center}
\vskip -0.2in
\end{figure*}

\textbf{Regular language tasks and long-term memory tasks}. The learning curves for regular language tasks are shown in Figure~\ref{fig:regular-learning-curve}: in \texttt{EVEN PAIRS}, all the sequence models could solve this task even in the hardest case; in \parity, LSTM shows significant advantages over LRU and GPT; in \sfive, GPT performs the worst, and is not capable of solving the medium case. 

In this table \ref{table:clustering-metric}, the silhouette scores~\cite{rousseeuw1987silhouettes} for each task are shown for different models to demonstrate the quality of hidden state representations.

\begin{itemize}
    \item In $\texttt{PARITY}$ and $\texttt{SYM(5)}$ tasks, both LRU and LSTM significantly outperform GPT, consistent with the performance shown in Figure~\ref{fig:regular-hidden}. In $\texttt{PARITY}$, both LRU and LSTM implicitly learn the true state within the $(\texttt{SEQ},\texttt{RL})$ framework, hence their scores are relatively close. However, in the more complex DFA structure of $\texttt{SYM(5)}$, there remains a gap between LRU and LSTM, consistent with the training situation depicted in Figure~\ref{fig:regular-hidden}.
    \item In $\texttt{EVEN PARIS}$, LSTM performs noticeably better than the other two models. Both LRU and GPT struggle to learn the true state. For LRU, it faces difficulty in capturing information about the first token from history. For GPT, although its state reconstruction in intermediate moves is poor, it manages to retain contextual information throughout, allowing it to directly extract information about the first token at termination states. This aligns with the visualization results where termination states match previous states, and each termination state exhibits significant separation, as depicted.
\end{itemize}

The learning curves for pure long-term memory tasks are shown in Figure~\ref{fig:memory-learning-curve}: LRU and GPT far outweigh LSTM in these two tasks, demonstrating their long-term memory abilities which LSTM prominently lacks. Although in the Passive T-Maze, GPT could solve the hardest case while LRU and LSTM could not, LRU exhibits strong sample efficiency and memory capacity in the more complicated task, Passive Visual Match, beating GPT and LSTM remarkably.

\begin{center} 
\begin{figure}[ht]
\vskip 0.2in
\subfloat[\texttt{EVEN PAIRS}]{
    \includegraphics[width=\columnwidth]{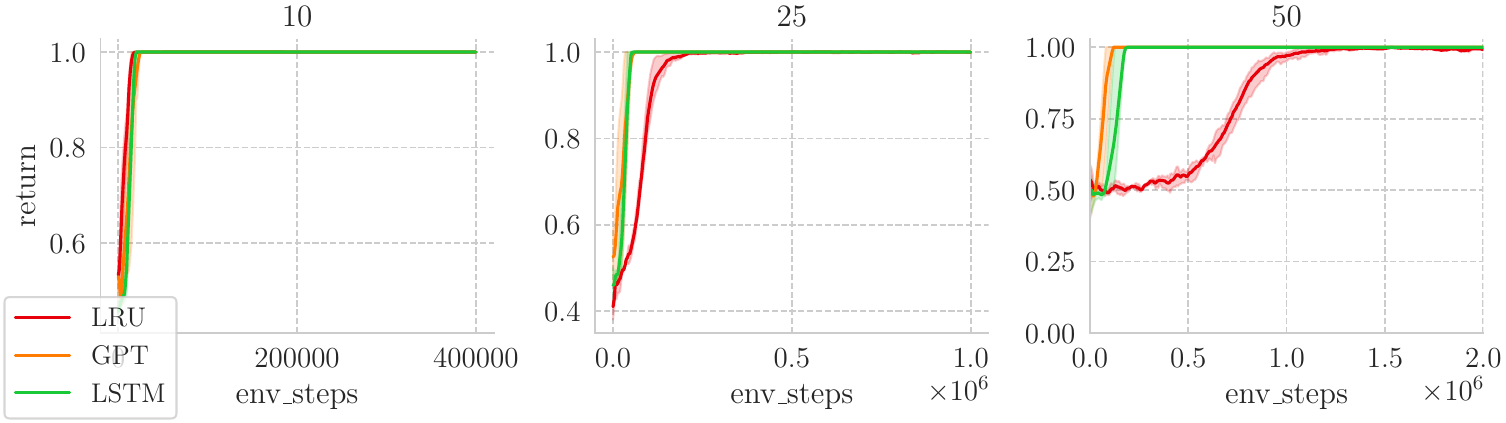}
}
\newline
\subfloat[\parity]{
    \includegraphics[width=\columnwidth]{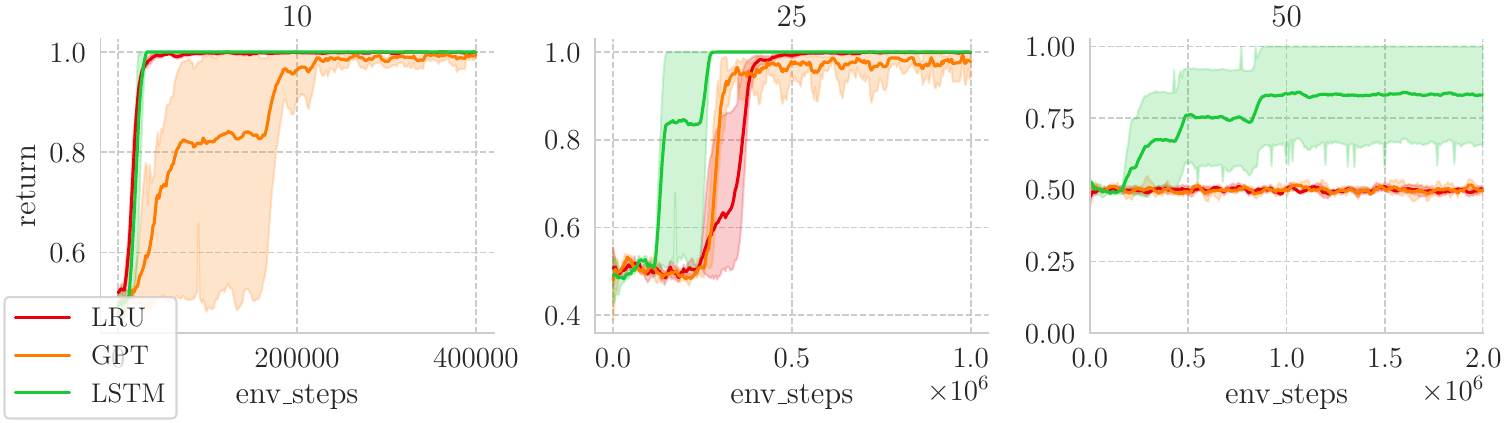}
}
\newline
\subfloat[\sfive]{
    \includegraphics[width=\columnwidth]{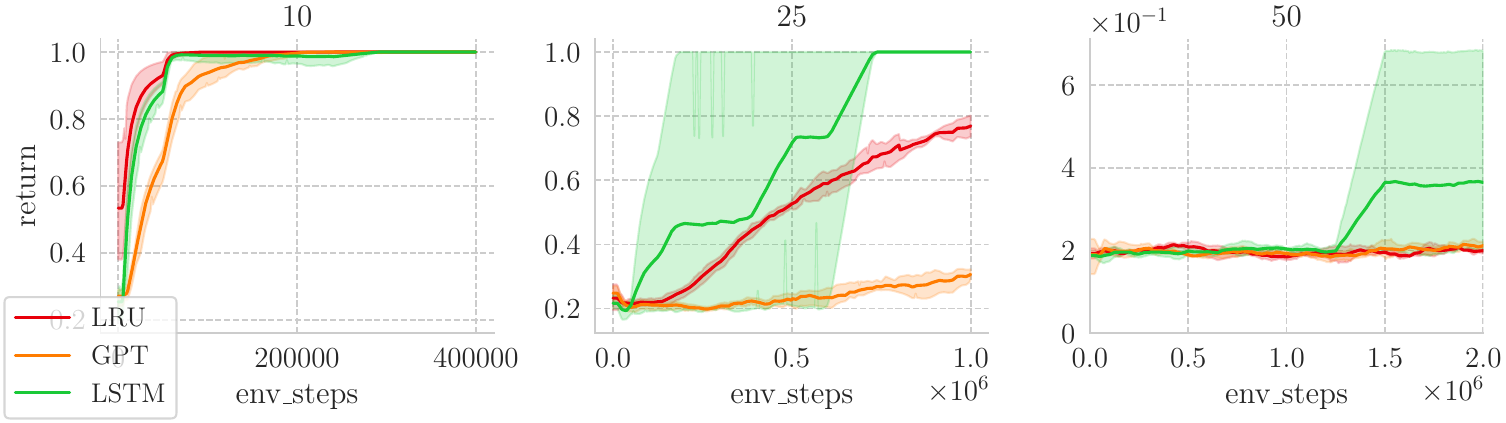}
}
\caption{\textbf{Learning curves for regular language tasks}. The shaded area indicates $95$\% confidence intervals. The upper number stands for the memory length.}
\label{fig:regular-learning-curve}
\vskip -0.2in
\end{figure}
\end{center}

\begin{table}[htbp]
    \centering
    \caption{\textbf{Scores for hidden state representation in regular language tasks} The silhouette score ranges from -1 to 1, where a score closer to 1 indicates that the samples are well-clustered and far from neighboring clusters, signifying a good separation between different states and good cohesion within the same state.}
    \vspace{0.1in}
    \begin{adjustbox}{width=0.3\textwidth}
    \begin{tabular}{cccc}
    \toprule
    Task & LRU & GPT & LSTM \\
    \midrule
    $\texttt{PARITY}$ & 0.12 & -0.03 & 0.1 \\
    $\texttt{SYM(5)}$ & 0.05 & -0.13 & 0.12 \\
    $\texttt{ENV PARIS}$ & 0.02 & -0.02 & 0.2 \\
    \bottomrule
\end{tabular}

    \end{adjustbox}
    \label{table:clustering-metric}
\end{table}

\begin{figure}[ht]
\vskip 0.2in
\begin{center}
\subfloat[Passive T-Maze (return)]{
    \includegraphics[width=\columnwidth]{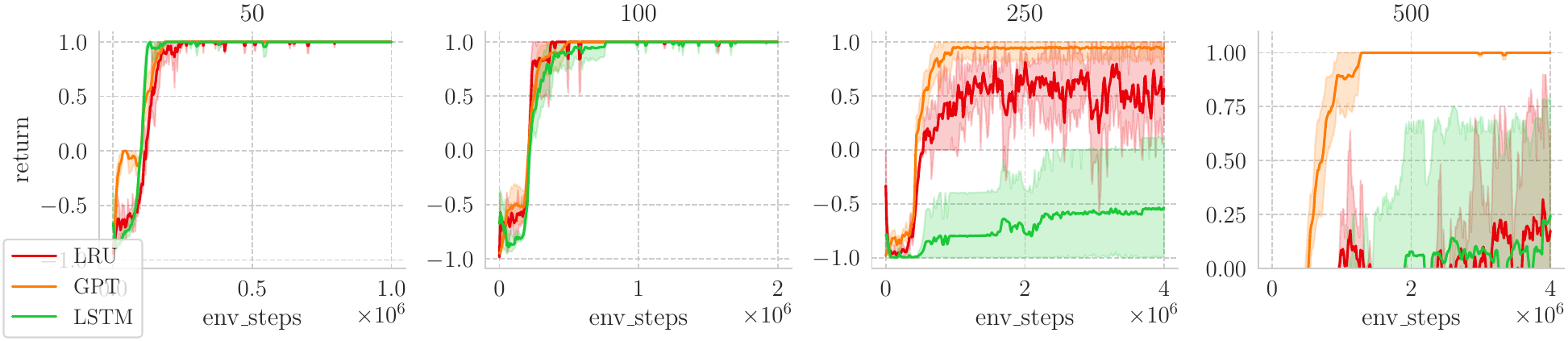}
}
\newline
\subfloat[Passive Visual Match (return)]{
    \includegraphics[width=\columnwidth]{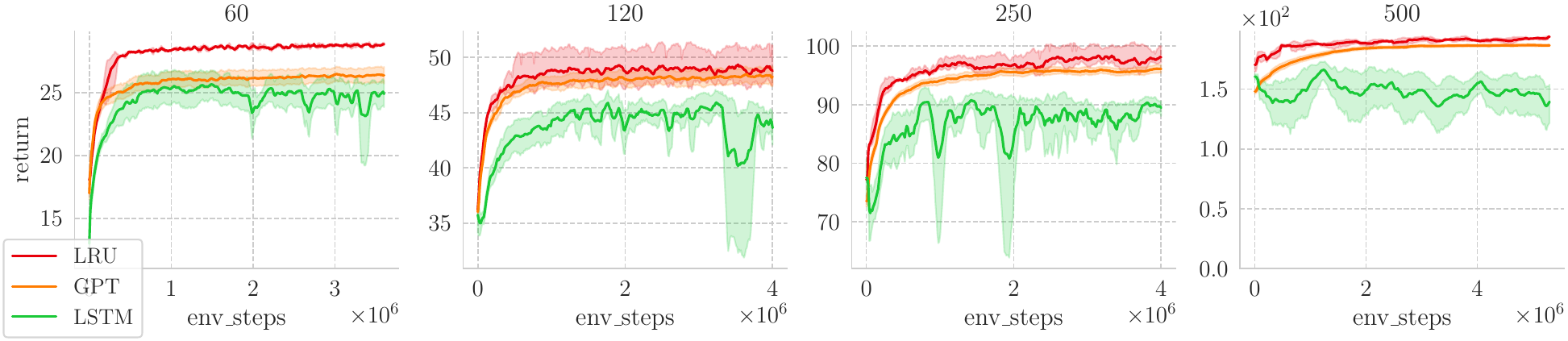}
}
\newline
\subfloat[Passive Visual Match (success)]{
    \includegraphics[width=\columnwidth]{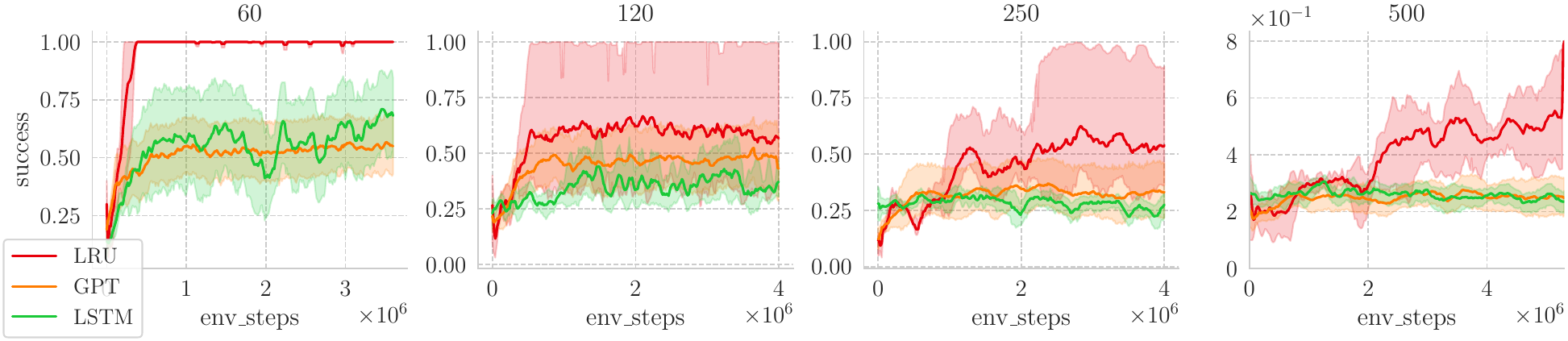}
}
\caption{\textbf{Learning curves for pure long-term memory tasks}. Mean of $3$ seeds. The shaded area indicates $95$\% confidence intervals. The upper number stands for memory length.}
\label{fig:memory-learning-curve}
\end{center}
\vskip -0.2in
\end{figure}

\textbf{Comparison with some published Transformer in RL}. We compared another practical POMDP environment: control tasks with random frame dropping~\cite{defog}, and the popular DT~\cite{dt} used in current Offline RL approaches. When applied to the processed dataset, DT failed to learn a control policy in the frame-dropping situation and required additional information to mitigate the partially observable condition. However, when replacing Transformer with RNN or Linear RNN, DRNN or DLRNN exhibited the ability to handle such problems without additional conditions. Figure~\ref{fig:defog}

\begin{figure*}[htb]
\vskip 0.2in
\begin{center}
\centerline{\includegraphics[width=0.8\textwidth]{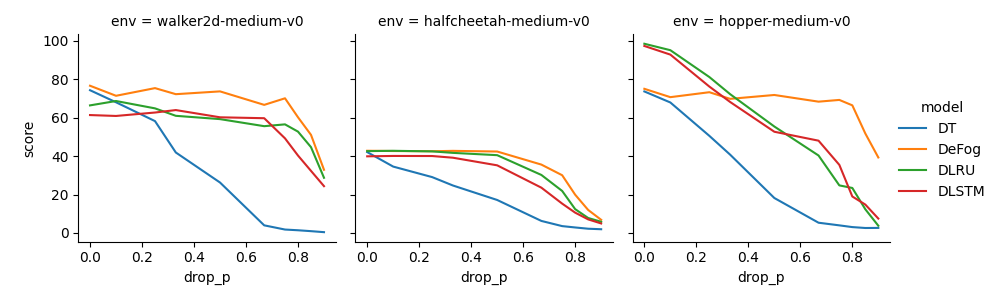}}
\caption{\textbf{Performance on continuous control tasks. with random frame dropping.} Mean of 5 seeds. We selected these medium datasets as the training set. The x-axis represents the probability of dropped frames, and the y-axis represents the D4RL normalized score~\cite{D4RL}.}
\label{fig:defog}
\end{center}
\vskip -0.2in
\end{figure*}
\end{document}